\documentclass{article}

\usepackage{microtype}
\usepackage{graphicx}
\usepackage{subcaption}
\usepackage{booktabs}
\PassOptionsToPackage{hyphens}{url}
\usepackage{hyperref}

\usepackage[accepted]{icml2026}

\usepackage{amsmath}
\usepackage{amssymb}
\usepackage{mathtools}
\usepackage{amsthm}
\usepackage{algorithm}
\usepackage{algorithmic}

\usepackage[capitalize,noabbrev]{cleveref}
\usepackage{xcolor}

\theoremstyle{plain}
\newtheorem{theorem}{Theorem}[section]
\newtheorem{proposition}[theorem]{Proposition}
\newtheorem{lemma}[theorem]{Lemma}

\theoremstyle{definition}

\theoremstyle{remark}
\newtheorem{remark}[theorem]{Remark}

\newcommand{\R}{\mathbb{R}}

\newcommand{\inner}[2]{\langle #1, #2 \rangle}
\newcommand{\norm}[1]{\| #1 \|}
\newcommand{\thetaAB}{\theta_{AB}}
\newcommand{\thetaBA}{\theta_{BA}}

\icmltitlerunning{The Geometry of Sequential Learning: Lie-Bracket Prediction of Transfer Order}

\begin{document}
\raggedbottom

\twocolumn[
  \icmltitle{The Geometry of Sequential Learning: \texorpdfstring{\\}{ }
    Lie-Bracket Prediction of Transfer Order}

  \icmlsetsymbol{equal}{*}

  \begin{icmlauthorlist}
    \icmlauthor{John Sweeney}{sideplane}
  \end{icmlauthorlist}

  \icmlaffiliation{sideplane}{Sideplane AI}

  \icmlcorrespondingauthor{John Sweeney}{john.sweeney@sideplane.ai}

  \icmlkeywords{transfer learning, fine-tuning, Lie brackets, commutators, curriculum learning, instruction tuning, preference optimization}

  \vskip 0.3in
]

\printAffiliationsAndNotice{}

\begin{abstract}
Sequential learning is order-dependent: from Pile-style next-token domain adaptation to instruction-SFT and DPO, $N$ candidate sources induce $N!$ possible curricula.
We show that the local order effect is governed by a computable geometric quantity, the Lie-bracket commutator of gradient update fields, yielding a pairwise score for whether $A\!\to\!B$ or $B\!\to\!A$ is better for a target domain.
The pairwise bracket primitive also defines a \emph{Lie-Bracket Tournament}: with a shared $\theta_0$ target-gradient reference, Hessian symmetry gives Borda/row-sum scores from one Hessian-vector product per source, $O(N)$ dot products, and an $O(N\log N)$ sort, without materializing the $O(N^2)$ edge matrix.
Empirically, the planner reaches 98.1\%/98.9\% pairwise accuracy at $k{=}1$ for instruction-SFT/DPO, remains at 73.1\%/72.2\% at $k{=}20$, and preserves the original pretraining-domain evidence with 82.4--92.0\% accuracy across four LLMs and 91.1\% on diffusion.
At curriculum scale, it recovers the best of all $3!$ schedules in 87.5\% of trials, ranks 85 Stack programming-language source domains for a Python target in the 99th sampled percentile, and reaches the 99.0--99.6th sampled percentile on 56 MMLU subjects, sharply above the reported descending gradient-norm baseline.
These results reframe sequential learning as a geometric tournament problem: commutators provide both local pairwise order information and a scalable primitive for many-domain schedules.
\end{abstract}

\section{Introduction}
\label{sec:intro}

Sequential post-training pipelines rarely adapt a model on a single homogeneous dataset.
A model may be instruction-tuned on several task families, preference-tuned on several feedback domains, and finally adapted or evaluated on a target distribution.
In such pipelines, the order of datasets is not a cosmetic implementation choice: training first on one domain and then another can change downstream target loss.
For two source domains $A$ and $B$, one can still run both $A\to B$ and $B\to A$.
For $N$ sources, however, exhaustive curriculum search requires evaluating $N!$ orders.
This is the central practical obstacle.

The central observation is that order dependence is structured.
Gradient-based training defines nonlinear update operators, and nonlinear operators generally do not commute.
When the update operators for two datasets fail to commute, composing them in opposite orders traces different trajectories in parameter space and can change target loss.
This is the same geometric phenomenon that appears in operator splitting and numerical dynamics, where the leading-order error is governed by a commutator, or Lie bracket~\citep{hairer2006geometric}.
We use this geometric lens to turn local order effects into a practical planner for sequential learning.

\paragraph{Pairwise geometric primitive.}
Let $L_D(\theta)$ be the loss on domain $D$, with gradient $g_D(\theta)=\nabla L_D(\theta)$ and Hessian $H_D(\theta)=\nabla^2 L_D(\theta)$.
A single gradient step on $D$ is the update operator $U_D(\theta)=\theta-\eta g_D(\theta)$.
For two sources $A,B$, the two compositions $\theta_{AB}=U_B(U_A(\theta_0))$ and $\theta_{BA}=U_A(U_B(\theta_0))$ satisfy
\begin{equation}
\theta_{AB}-\theta_{BA}=\eta^2\bigl(H_Bg_A-H_Ag_B\bigr)+O(\eta^3),
\label{eq:intro_theta_diff}
\end{equation}
with all quantities evaluated at $\theta_0$.
Thus the bracket vector $b_{AB}:=H_Bg_A-H_Ag_B$ is the leading driver of order dependence.
Projecting this displacement onto the target gradient gives the directional score
\begin{equation}
\sigma_{AB}^{(E)} := \langle g_E,b_{AB}\rangle,
\end{equation}
whose sign predicts whether $A\to B$ or $B\to A$ gives lower loss on target $E$.
We reserve $\omega_{AB}^{(E)}$ for the normalized cosine version of this alignment, so $|\hat\omega|$ is a scale-free confidence score while $\eta^2|\hat\sigma|$ estimates stakes.

\paragraph{Drift-matched scoring.}
The target gradient changes along the shared first-order drift induced by $A$ and $B$.
Rather than evaluating $g_E$ only at $\theta_0$, our default estimator evaluates it at the Trotter reference point $\theta_{\mathrm{ref}}=\theta_0-\eta(g_A+g_B)$.
This point approximates the common drift of both orders without running either complete order, reducing target-gradient drift error while preserving deployability.

\paragraph{From pairs to Lie-Bracket Tournaments.}
The pairwise score becomes a many-domain scheduler once each pair of source domains is treated as a tournament edge.
For a fixed target $E$, the score for $D_i$ versus $D_j$ says which one should go earlier and by how much.
The collection of pairwise preferences forms a weighted \emph{Lie-Bracket Tournament}; ranking the tournament turns the original $N!$ curriculum search problem into a ranking problem over $N$ domains.
The computational simplification is substantial: in the scalable tournament variant we use a shared $\theta_0$ target-gradient reference, so Hessian symmetry gives all Borda/row-sum scores from the source gradients $g_i$ and one Hessian-vector product $u_i=H_i g_E(\theta_0)$ per source.
The full pairwise matrix need not be materialized for Borda/row-sum ranking, so score computation is linear in $N$, followed by an $O(N\log N)$ sort.
This shared-reference choice is different from the drift-matched pairwise planner; it is what makes tournament edges $N$-independent and exact row-sum aggregation cheap.
Exact permutation scoring remains available for small $N$.

\paragraph{Evidence and scope.}
The experiments are organized around the same progression.
First, instruction-SFT and DPO show that the bracket transfers beyond the original next-token setting: pairwise sign accuracy reaches 98.1\% and 98.9\% at $k{=}1$, and $k{=}20$ order decisions remain above 72\% in both settings.
The original Pile-style pretraining-domain and diffusion experiments remain in the main body as the cross-model validation of the same primitive.
Second, sequence and large-$N$ experiments show that pairwise bracket edges compose: the method recovers the best of all $3!$ schedules 87.5\% of the time, places it in the top two 96.0\% of the time, ranks all 85 Stack programming-language source domains for a Python target in the 99th sampled percentile, and reaches the 99.0--99.6th sampled percentile on 56 MMLU subjects.
Third, long-horizon and wall-clock studies show that the planner remains useful when brute-force order testing becomes expensive: over 1,020 Llama trials, accuracy is 81.5\% at $k{=}20$ and 65.3\% at $k{=}50$, while planning is already faster than trying both orders around $k\approx 3$.

\paragraph{Contributions.}
We summarize our contributions as follows.
\begin{enumerate}
\item \textbf{A commutator theory of transfer order.} We connect the order-dependent target-loss difference to the directional bracket score $\sigma_{AB}^{(E)}=\inner{g_E}{b_{AB}}$ and show that its sign predicts whether $A{\to}B$ or $B{\to}A$ yields lower target loss.
\item \textbf{A deployable pairwise planner.} We introduce a drift-matched Trotter estimator for $\sigma$ using gradients and Hessian-vector products in a chosen trainable parameter subset, together with stakes and confidence scores.
\item \textbf{Lie-Bracket Tournaments for many-domain scheduling.} We show that the same bracket primitive defines a weighted tournament over source domains, enabling curriculum ranking without evaluating $N!$ permutations. The scalable Borda/row-sum variant deliberately uses a shared $\theta_0$ target-gradient reference, not the pair-specific Trotter reference, which yields $N$-independent edges, one HVP per source, linear score computation, and an $O(N\log N)$ sort.
\item \textbf{Validation across post-training and domain adaptation.} We evaluate instruction-SFT, DPO/offline preference optimization, next-token pretraining-domain adaptation, diffusion adaptation, long-horizon training up to $k{=}50$, large-$N$ scheduling up to $N{=}85$, wall-clock planning cost, and stateful-optimizer behavior.
\item \textbf{Theory-driven step-size selection.} We develop an automated $\eta$-autopilot that selects step sizes from pilot data by balancing detectability against higher-order sign-stability constraints, avoiding manual per-model tuning.
\end{enumerate}

\section{Preliminaries}
\label{sec:prelim}

\subsection{Notation and Setup}

Let $L_D(\theta)$ denote the expected training loss on dataset/domain $D$
for parameters $\theta \in \R^p$.
We write gradients and Hessians as
\begin{equation}
g_D(\theta)=\nabla_\theta L_D(\theta),\qquad
H_D(\theta)=\nabla_\theta^2 L_D(\theta).
\end{equation}
We consider three domains:
two \emph{source} domains $A,B$ and one \emph{target} domain $E$.
For each domain, we compute gradients and Hessians on training batches, and
evaluate final losses $L_E(\thetaAB)$ vs. $L_E(\thetaBA)$ on held-out evaluation batches.

We model a single gradient step on domain $D$ with step size $\eta>0$ as
the operator
\begin{equation}
U_D(\theta) := \theta - \eta\, g_D(\theta).
\label{eq:update-operator}
\end{equation}
Although practical fine-tuning runs many optimizer steps (often with adaptive
methods), we use $U_D$ as a local surrogate for a short fine-tuning phase.
We evaluate gradients and HVPs at a reference point and use an effective step size
$\eta$ (the per-step learning rate in our experiments).
We apply the commutator logic locally and empirically verify that the predictor
remains accurate beyond the infinitesimal regime.
Throughout, we use $k$ to denote the number of gradient steps per domain
(so $k{=}1$ means one step on $A$ then one step on $B$; $k{=}20$ means 20 steps on each, 40 total per ordering).
Given an initialization $\theta_0$, the two-step compositions are
\begin{equation}
\thetaAB := U_B(U_A(\theta_0)),
\qquad
\thetaBA := U_A(U_B(\theta_0)).
\label{eq:two-orders}
\end{equation}
Our object of interest is the \emph{order effect on the target loss}
\begin{equation}
\Delta_E(A,B) := L_E(\thetaAB) - L_E(\thetaBA).
\label{eq:delta-def}
\end{equation}

\noindent\fbox{\parbox{0.95\columnwidth}{
\textbf{Problem Definition.}
\textit{Input:} Source domains $A, B$, target domain $E$, initialization $\theta_0$, step size $\eta$, trainable parameter subset $S$.
\textit{Output:} Predict $\arg\min\{L_E(\thetaAB), L_E(\thetaBA)\}$, stakes score $s$, and confidence score $c$.
}}

\subsection{Commutators and Lie Brackets of Update Operators}
\label{sec:commutators}

Order dependence arises because the update maps $U_A, U_B$ do not commute:
$\thetaAB \neq \thetaBA$ in general.
For gradient descent steps,
this non-commutativity admits a controlled second-order expansion
that isolates a single geometric object.

\begin{lemma}[Order dependence of two gradient steps]
\label{lem:theta-commutator}
Let $L_A$ and $L_B$ be three times continuously differentiable in a neighborhood of $\theta_0$.
Assume their Hessians are locally Lipschitz near $\theta_0$;\footnote{Equivalently, the third derivatives are bounded on a neighborhood. This is the standard smoothness condition needed to make the $O(\eta^3)$ remainder explicit.}
that is, there exist constants $M_3>0$ and $R>0$ such that for all $\theta$ with $\norm{\theta-\theta_0}\le R$,
\[
\begin{aligned}
\norm{H_A(\theta)-H_A(\theta_0)} &\le M_3 \norm{\theta-\theta_0}, \\
\norm{H_B(\theta)-H_B(\theta_0)} &\le M_3 \norm{\theta-\theta_0}.
\end{aligned}
\]
Then for sufficiently small $\eta$ (so that intermediate iterates remain within the ball),
\begin{equation}
\begin{aligned}
\thetaAB-\thetaBA
&= \eta^2\Bigl(H_B(\theta_0)\,g_A(\theta_0) - H_A(\theta_0)\,g_B(\theta_0)\Bigr) \\
&\quad + r_{AB},
\end{aligned}
\label{eq:theta-diff}
\end{equation}
where the remainder satisfies $\norm{r_{AB}} \le C\,\eta^3$ for a constant $C$ depending only on local smoothness
(e.g.\ $M_3$) and the local gradient norms at $\theta_0$.
\end{lemma}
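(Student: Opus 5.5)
We expand each composition directly using a first-order Taylor expansion of the gradient with an integral-form remainder, and then subtract. Write $g_A=g_A(\theta_0)$, $g_B=g_B(\theta_0)$, $H_A=H_A(\theta_0)$, $H_B=H_B(\theta_0)$.

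The first step is $\theta_1^{A}:=U_A(\theta_0)=\theta_0-\eta g_A$. For the second step we need $g_B(\theta_1^A)$. By the fundamental theorem of calculus applied to $t\mapsto g_B(\theta_0+t\delta)$ with $\delta=-\eta g_A$,
\[
g_B(\theta_0+\delta)=g_B+H_B\delta+\int_0^1\bigl(H_B(\theta_0+t\delta)-H_B\bigr)\delta\,dt .
\]
The local Lipschitz hypothesis bounds the integral term by $\tfrac12 M_3\norm{\delta}^2=\tfrac12 M_3\eta^2\norm{g_A}^2$, provided $\eta\norm{g_A}\le R$. Hence
\[
\thetaAB=\theta_0-\eta g_A-\eta g_B+\eta^2 H_B g_A+\rho_{AB},
\qquad \norm{\rho_{AB}}\le \tfrac12 M_3\eta^3\norm{g_A}^2 .
\]
Exchanging the roles of $A$ and $B$ gives the analogous expansion for $\thetaBA$ with remainder bounded by $\tfrac12 M_3\eta^3\norm{g_B}^2$, valid when $\eta\norm{g_B}\le R$.

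Subtracting the two expansions, the zeroth- and first-order terms $\theta_0-\eta(g_A+g_B)$ are symmetric in $A,B$ and cancel exactly. What remains is $\eta^2(H_Bg_A-H_Ag_B)+r_{AB}$ with $r_{AB}=\rho_{AB}-\rho_{BA}$. The triangle inequality then gives $\norm{r_{AB}}\le \tfrac12 M_3\eta^3(\norm{g_A}^2+\norm{g_B}^2)$, so $C=\tfrac12M_3(\norm{g_A}^2+\norm{g_B}^2)$, which depends only on $M_3$ and the gradient norms at $\theta_0$, as claimed. The "sufficiently small $\eta$" condition is precisely $\eta\max(\norm{g_A},\norm{g_B})\le R$. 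Under it, the segments from $\theta_0$ to each intermediate iterate lie in the ball, so the Lipschitz bound applies along the whole integration path.

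The main point of care is making sure the remainder is genuinely $O(\eta^3)$ rather than $O(\eta^2)$. This works because the Hessian is evaluated at $\theta_0$ and not at the intermediate iterate: using $H_B(\theta_1^A)$ instead would require an extra Lipschitz step, which contributes only another $\eta^3$ term. Only the first intermediate point needs to lie in the ball, since the second gradient in each composition is evaluated at $\theta_0\pm$ one step. The $C^3$ assumption is not needed beyond the Lipschitz Hessian, which is what the integral remainder uses.
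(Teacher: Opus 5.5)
Your proof is correct and matches the paper's argument step for step. Both expand the second-step gradient about $\theta_0$ with an integral remainder, bound it by $\frac{M_3}{2}\eta^2\norm{g_A(\theta_0)}^2$ (respectively $\norm{g_B(\theta_0)}^2$) using the Lipschitz-Hessian condition, multiply by the outer $\eta$ and subtract, giving the same constant $C=\frac{M_3}{2}\bigl(\norm{g_A(\theta_0)}^2+\norm{g_B(\theta_0)}^2\bigr)$; your explicit smallness condition $\eta\max(\norm{g_A},\norm{g_B})\le R$ usefully makes the paper's ``sufficiently small $\eta$'' precise.
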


\begin{remark}[Projected updates]
All statements in this section hold verbatim for projected updates
$U_{D,S}(\theta)=\theta-\eta P_S g_D(\theta)$, where $P_S$ is a fixed linear projection
(e.g., coordinate projection onto a chosen parameter block), by replacing gradients with
$P_S g_D$ and Hessians with $P_S H_D P_S$ (equivalently, working in coordinates
on $S$).
\end{remark}

\noindent
\textbf{Interpretation.}
Define the \emph{bracket vector} (at $\theta_0$)
\begin{equation}
b_{AB}(\theta_0)
:= H_B(\theta_0)\,g_A(\theta_0) - H_A(\theta_0)\,g_B(\theta_0).
\label{eq:bracket-vector}
\end{equation}
Then Lemma~\ref{lem:theta-commutator} states that the leading-order driver of order dependence is
$\thetaAB-\thetaBA \approx \eta^2 b_{AB}(\theta_0)$.

\paragraph{Connection to Lie brackets.}
Let $f_A(\theta) := -g_A(\theta)$ and $f_B(\theta) := -g_B(\theta)$ be the gradient-flow vector fields.
Their Lie bracket is
\[
\begin{aligned}
[f_A,f_B](\theta)
&:= \bigl(\nabla f_B(\theta)\bigr)f_A(\theta) - \bigl(\nabla f_A(\theta)\bigr)f_B(\theta) \\
&= H_B(\theta)\,g_A(\theta) - H_A(\theta)\,g_B(\theta),
\end{aligned}
\]
so $b_{AB}(\theta_0)$ equals $[f_A,f_B](\theta_0)$ under our sign convention.
Thus the same commutator term that appears in the Baker--Campbell--Hausdorff (BCH)
expansion~\citep{baker1905alternants,campbell1897law,hausdorff1906symbolische} for continuous-time flows
also governs the order sensitivity of discrete gradient steps (to second order).

Appendix~\ref{app:proof_theta_commutator} gives the proof with explicit constants.
The main text uses only the second-order scaling and the identification of the leading commutator term.

\subsection{From Parameter Commutators to Target Loss Differences}
\label{sec:loss-diff}

Let $E$ denote a \emph{target} domain. The order effect on $E$ is
\begin{equation}
\Delta_E(A,B) := L_E(\thetaAB) - L_E(\thetaBA).
\label{eq:delta-redef}
\end{equation}

\begin{proposition}[Directional bracket approximation of target loss difference]
\label{prop:delta-omega}
Assume $L_E$ is twice continuously differentiable near $\theta_0$ with a locally bounded
Hessian, and that the assumptions of Lemma~\ref{lem:theta-commutator} hold.
Then, as $\eta\to 0$,
\begin{equation}
\begin{aligned}
\Delta_E(A,B)
&= \eta^2\,\sigma_{AB}^{(E)}(\theta_0) + O(\eta^3), \\
\sigma_{AB}^{(E)}(\theta_0)
&:= \inner{g_E(\theta_0)}{b_{AB}(\theta_0)}.
\end{aligned}
\label{eq:sigma-def}
\end{equation}
\end{proposition}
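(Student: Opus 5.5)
The plan is to Taylor-expand $L_E$ around one of the two endpoints and then feed in Lemma~\ref{lem:theta-commutator}. Write $\delta := \thetaAB-\thetaBA$. Since $L_E$ is $C^2$ near $\theta_0$, the mean-value (Taylor with integral remainder) form gives
\[
L_E(\thetaAB)-L_E(\thetaBA)=\inner{g_E(\thetaBA)}{\delta}+\tfrac12\,\delta^\top H_E(\xi)\,\delta
\]
for some $\xi$ on the segment between $\thetaBA$ and $\thetaAB$. Both endpoints lie within distance $O(\eta)$ of $\theta_0$, because each is obtained from two gradient steps of size $\eta$ with locally bounded gradients. So for small $\eta$ the segment stays in the neighborhood where $\norm{H_E}\le M_E$.

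The key steps are then the following.

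\textbf{Step 1.} By Lemma~\ref{lem:theta-commutator}, $\delta=\eta^2 b_{AB}(\theta_0)+r_{AB}$ with $\norm{r_{AB}}\le C\eta^3$. In particular $\norm{\delta}=O(\eta^2)$, so the quadratic term is bounded by $\tfrac12 M_E\norm{\delta}^2=O(\eta^4)$.

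\textbf{Step 2.} Replace $g_E(\thetaBA)$ by $g_E(\theta_0)$. Local boundedness of $H_E$ makes $g_E$ Lipschitz near $\theta_0$ with constant $M_E$. Hence $\norm{g_E(\thetaBA)-g_E(\theta_0)}\le M_E\norm{\thetaBA-\theta_0}=O(\eta)$, and this error pairs with $\norm{\delta}=O(\eta^2)$ to give $O(\eta^3)$.

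\textbf{Step 3.} Expand the remaining term:
\[
\inner{g_E(\theta_0)}{\delta}=\eta^2\inner{g_E(\theta_0)}{b_{AB}(\theta_0)}+\inner{g_E(\theta_0)}{r_{AB}}.
\]
The last piece is at most $\norm{g_E(\theta_0)}\,C\eta^3$. Collecting the three error terms yields $\Delta_E(A,B)=\eta^2\sigma_{AB}^{(E)}(\theta_0)+O(\eta^3)$.

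The main point needing care is the $O(\eta)$ distance bound $\norm{\thetaBA-\theta_0}$. Explicitly,
\[
\thetaBA-\theta_0=-\eta g_B(\theta_0)-\eta\, g_A\bigl(\theta_0-\eta g_B(\theta_0)\bigr).
\]
Its norm is at most $\eta\bigl(\norm{g_B(\theta_0)}+\norm{g_A(\theta_0)}+\eta\,\sup\norm{H_A}\,\norm{g_B(\theta_0)}\bigr)$, where the supremum of $\norm{H_A}$ over the ball is finite by the Lipschitz-Hessian hypothesis of the lemma. This bound also ensures that, for $\eta$ small enough, all points involved stay in the common neighborhood where the hypotheses on $H_A$, $H_B$ and $H_E$ hold. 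The resulting constant depends on $M_E$, $C$, the gradient norms at $\theta_0$, and these Hessian bounds. Note that only $C^2$ regularity of $L_E$ is needed, because the first-order term already carries the factor $\eta^2$ from $\delta$.
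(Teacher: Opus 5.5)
Your proof is correct and follows essentially the same route as the paper. Both arguments write $\Delta_E(A,B)$ as a first-order term in $d=\thetaAB-\thetaBA$, substitute $d=\eta^2 b_{AB}(\theta_0)+r_{AB}$ from Lemma~\ref{lem:theta-commutator}, and use that $g_E$ changes by only $O(\eta)$ within an $O(\eta)$-neighborhood of $\theta_0$. The differences are minor: the paper evaluates $g_E$ at an intermediate point $\theta^\star\in[\thetaBA,\thetaAB]$ via the mean value theorem, whereas you expand at the endpoint $\thetaBA$ with an explicit quadratic remainder of order $O(\eta^4)$, and you bound $\norm{\thetaBA-\theta_0}$ more carefully.
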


\paragraph{Sign convention (the decision rule).}
Since $\Delta_E(A,B) < 0$ means $A\!\to\!B$ yields \emph{lower} target loss than $B\!\to\!A$,
Proposition~\ref{prop:delta-omega} motivates the predictor
\begin{equation}
\widehat{\text{order}}(A,B;E)
=
\begin{cases}
A\!\to\!B & \text{if } \hat\sigma_{AB}^{(E)} < 0,\\
B\!\to\!A & \text{otherwise.}
\end{cases}
\label{eq:order-predictor}
\end{equation}

\paragraph{Normalized score (optional but useful).}
We reserve $\sigma$ for the unnormalized directional score; $\omega$ denotes
an optional scale-free cosine.
In practice it is often convenient to report the scale-free cosine score
\begin{equation}
\omega_{AB}^{(E)}(\theta) := \frac{\inner{g_E(\theta)}{b_{AB}(\theta_0)}}{\norm{g_E(\theta)}\,\norm{b_{AB}(\theta_0)}+\varepsilon},
\label{eq:omega-def}
\end{equation}
whose sign matches the numerator $\inner{g_E(\theta)}{b_{AB}(\theta_0)}$.
In particular, at $\theta=\theta_0$ we have
$\operatorname{sign}(\omega_{AB}^{(E)}(\theta_0))=\operatorname{sign}(\sigma_{AB}^{(E)}(\theta_0))$.
The unnormalized $\sigma$ is the quantity directly appearing in the asymptotic expansion~\eqref{eq:sigma-def}.

\paragraph{Stakes and confidence.}
Proposition~\ref{prop:delta-omega} suggests that the \emph{magnitude} of the effect is
$|\Delta_E(A,B)| \approx \eta^2 |\sigma_{AB}^{(E)}|$.
Accordingly, we define the predicted stakes score
\begin{equation}
s_{AB}^{(E)} := \eta^2\,\bigl|\hat\sigma_{AB}^{(E)}\bigr|.
\label{eq:stakes}
\end{equation}
In experiments, accuracy increases as we restrict to pairs with large $|\hat\omega|$
(strong alignment between $g_E$ and the bracket), which are empirically much more
stable across seeds (Table~\ref{tab:main_results}).

\noindent\emph{Proof sketch.}
By the mean value theorem,
\[
\begin{aligned}
\Delta_E(A,B) &= \inner{g_E(\theta^\star)}{\thetaAB-\thetaBA}, \\
\theta^\star &\in [\thetaBA,\thetaAB].
\end{aligned}
\]
Using Lemma~\ref{lem:theta-commutator}, $\thetaAB-\thetaBA = \eta^2 b_{AB}(\theta_0) + O(\eta^3)$.
Moreover, $\theta^\star = \theta_0 + O(\eta)$ (since both $\thetaAB$ and $\thetaBA$ are $O(\eta)$ away from $\theta_0$),
so $g_E(\theta^\star) = g_E(\theta_0) + O(\eta)$ under bounded $H_E$.
Multiplying gives $\Delta_E(A,B) = \eta^2\inner{g_E(\theta_0)}{b_{AB}(\theta_0)} + O(\eta^3)$.
A complete proof is given in Appendix~\ref{app:proof_delta_omega}.

\subsection{Reference-Point Estimators}
\label{sec:trap-local}

Proposition~\ref{prop:delta-omega} evaluates the directional bracket using $g_E(\theta_0)$.
However, the mean value theorem implies there exists $\theta^\star\in[\thetaBA,\thetaAB]$ such that
\[
\Delta_E(A,B)=\inner{g_E(\theta^\star)}{\thetaAB-\thetaBA},
\]
so the relevant target gradient is generally evaluated at an \emph{intermediate}
point between the two endpoints.
A key observation is that both orders share a common first-order drift
$-\eta(g_A+g_B)$ from $\theta_0$, so $[\thetaBA,\thetaAB]$ lies within $O(\eta^2)$ of a single reference point.
Evaluating $g_E$ there approximates $g_E(\theta^\star)$ to second order.

Define the reference point
\begin{equation}
\theta_{\mathrm{ref}}
:= \theta_0 - \eta\bigl(g_A(\theta_0)+g_B(\theta_0)\bigr).
\label{eq:theta-ref}
\end{equation}

\begin{lemma}[Shared first-order drift]
\label{lem:shared_drift}
Assume the conditions of Lemma~\ref{lem:theta-commutator}.
Then the two-step endpoints satisfy
\[
\begin{aligned}
\thetaAB &= \theta_{\mathrm{ref}} + \eta^2 H_B(\theta_0)g_A(\theta_0) + O(\eta^3), \\
\thetaBA &= \theta_{\mathrm{ref}} + \eta^2 H_A(\theta_0)g_B(\theta_0) + O(\eta^3).
\end{aligned}
\]
In particular, $\norm{\thetaAB-\theta_{\mathrm{ref}}} \vee \norm{\thetaBA-\theta_{\mathrm{ref}}} = O(\eta^2)$.
\end{lemma}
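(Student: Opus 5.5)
We expand each endpoint directly rather than only their difference. This is the same expansion that underlies Lemma~\ref{lem:theta-commutator}, so we reuse its smoothness constants and its ball-confinement argument. Write $\theta_A := U_A(\theta_0) = \theta_0 - \eta g_A(\theta_0)$, so that $\thetaAB = \theta_A - \eta g_B(\theta_A)$. The only nontrivial step is expanding $g_B$ around $\theta_0$ along the displacement $\theta_A-\theta_0 = -\eta g_A(\theta_0)$.

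First, we confine the iterates. For $\eta \le R/(\norm{g_A(\theta_0)}+1)$ we have $\norm{\theta_A-\theta_0}\le R$, so the Lipschitz-Hessian hypothesis applies on the segment $[\theta_0,\theta_A]$. Second, we apply Taylor's theorem with integral remainder:
\[
g_B(\theta_A) = g_B(\theta_0) + H_B(\theta_0)(\theta_A-\theta_0) + \int_0^1 \bigl(H_B(\theta_0+t(\theta_A-\theta_0)) - H_B(\theta_0)\bigr)(\theta_A-\theta_0)\,dt .
\]
The Lipschitz bound controls the integral by $\tfrac{M_3}{2}\norm{\theta_A-\theta_0}^2 \le \tfrac{M_3}{2}\eta^2\norm{g_A(\theta_0)}^2$. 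Hence
\[
g_B(\theta_A) = g_B(\theta_0) - \eta H_B(\theta_0)g_A(\theta_0) + e_B, \qquad \norm{e_B}\le \tfrac{M_3}{2}\eta^2\norm{g_A(\theta_0)}^2 .
\]

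Third, we substitute this into $\thetaAB = \theta_0 - \eta g_A(\theta_0) - \eta g_B(\theta_A)$ to get
\[
\thetaAB = \theta_0 - \eta\bigl(g_A(\theta_0)+g_B(\theta_0)\bigr) + \eta^2 H_B(\theta_0)g_A(\theta_0) - \eta e_B ,
\]
that is, $\thetaAB = \theta_{\mathrm{ref}} + \eta^2 H_B(\theta_0)g_A(\theta_0) + O(\eta^3)$. The remainder is bounded explicitly by $\tfrac{M_3}{2}\eta^3\norm{g_A(\theta_0)}^2$. The $\thetaBA$ identity follows by swapping the roles of $A$ and $B$, which gives the bound $\tfrac{M_3}{2}\eta^3\norm{g_B(\theta_0)}^2$. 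As a consistency check, subtracting the two expansions recovers Lemma~\ref{lem:theta-commutator}.

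Finally, the ``in particular'' claim follows from the triangle inequality. For instance, $\norm{\thetaAB-\theta_{\mathrm{ref}}} \le \eta^2\norm{H_B(\theta_0)}\,\norm{g_A(\theta_0)} + C\eta^3 = O(\eta^2)$, and likewise for $\thetaBA$. Taking the maximum of the two bounds gives the statement.

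The main obstacle is bookkeeping, not conceptual. We must ensure the small-$\eta$ condition keeps both intermediate points $\theta_A$ and $\theta_B$ inside the radius-$R$ ball, and we must remember that only $C^2$ smoothness with a Lipschitz Hessian is used, not any bound on fourth derivatives. Both requirements are already implicit in the hypotheses of Lemma~\ref{lem:theta-commutator}.
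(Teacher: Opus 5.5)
Your proof is correct and is essentially the paper's argument. The paper's proof simply points back to the endpoint expansions $\thetaAB = \theta_0-\eta(g_A+g_B)+\eta^2H_Bg_A-\eta R_B$ (and the $A\leftrightarrow B$ analogue), already obtained in the proof of Lemma~\ref{lem:theta-commutator} by the same integral-remainder Taylor expansion and Lipschitz-Hessian bound, then substitutes $\theta_{\mathrm{ref}}$; you re-derive those expansions explicitly with the same $\tfrac{M_3}{2}\eta^3\norm{g_A(\theta_0)}^2$ remainder and also make explicit the ball-confinement condition on $\eta$, which the paper leaves implicit.
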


\begin{proposition}[Drift-matched target gradient]
\label{prop:drift_matched}
Assume the conditions of Lemma~\ref{lem:theta-commutator} and that $g_E$ is locally $M_{2,E}$-Lipschitz
(i.e., $\norm{H_E(\theta)}\le M_{2,E}$ in a neighborhood of $\theta_0$).
Let $d:=\thetaAB-\thetaBA$.
Then there exists $\theta^\star\in[\thetaBA,\thetaAB]$ such that
$\Delta_E(A,B)=\inner{g_E(\theta^\star)}{d}$, and moreover $\norm{\theta^\star-\theta_{\mathrm{ref}}}=O(\eta^2)$.
Consequently,
\[
\begin{aligned}
\bigl|\Delta_E(A,B)-\inner{g_E(\theta_{\mathrm{ref}})}{d}\bigr|
&\le \norm{g_E(\theta^\star)-g_E(\theta_{\mathrm{ref}})}\,\norm{d} \\
&\le M_{2,E}\,\norm{\theta^\star-\theta_{\mathrm{ref}}}\,\norm{d} \\
&=O(\eta^4).
\end{aligned}
\]
\end{proposition}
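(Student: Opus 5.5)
The argument has three pieces: an exact mean-value representation, a localization of $\theta^\star$ near $\theta_{\mathrm{ref}}$ via Lemma~\ref{lem:shared_drift}, and a Cauchy--Schwarz/Lipschitz bound. Throughout we take $\eta$ small enough that all iterates, and the segment $[\thetaBA,\thetaAB]$, lie in the ball of radius $R$ from Lemma~\ref{lem:theta-commutator}, on which $\norm{H_E}\le M_{2,E}$.

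\textbf{Step 1 (existence of $\theta^\star$).} Define $\phi(t)=L_E(\thetaBA+t\,d)$ for $t\in[0,1]$. Since $L_E$ is $C^1$ on a convex neighborhood containing the segment, $\phi$ is differentiable with $\phi'(t)=\inner{g_E(\thetaBA+td)}{d}$. The scalar mean value theorem gives $t^\star\in(0,1)$ with $\Delta_E(A,B)=\phi(1)-\phi(0)=\inner{g_E(\theta^\star)}{d}$, where $\theta^\star=\thetaBA+t^\star d$. The scalar MVT is enough here because we only need an exact identity for the scalar $\Delta_E$; no vector-valued MVT is required.

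\textbf{Step 2 ($\norm{\theta^\star-\theta_{\mathrm{ref}}}=O(\eta^2)$).} Write $\theta^\star-\theta_{\mathrm{ref}}=(1-t^\star)(\thetaBA-\theta_{\mathrm{ref}})+t^\star(\thetaAB-\theta_{\mathrm{ref}})$. By convexity and Lemma~\ref{lem:shared_drift},
\[
\norm{\theta^\star-\theta_{\mathrm{ref}}}\le \max\bigl(\norm{\thetaAB-\theta_{\mathrm{ref}}},\norm{\thetaBA-\theta_{\mathrm{ref}}}\bigr)=O(\eta^2).
\]
We also need $\norm{d}=O(\eta^2)$. This follows from Lemma~\ref{lem:theta-commutator}, since $d=\eta^2 b_{AB}(\theta_0)+r_{AB}$ with $\norm{r_{AB}}\le C\eta^3$. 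Equivalently, it follows from the triangle inequality using the two displayed bounds of Lemma~\ref{lem:shared_drift}.

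\textbf{Step 3 (the bound).} Subtract $\inner{g_E(\theta_{\mathrm{ref}})}{d}$ from the identity of Step 1 and apply Cauchy--Schwarz to get the first inequality. The second inequality follows from the mean value inequality $\norm{g_E(x)-g_E(y)}\le \sup\norm{H_E}\,\norm{x-y}\le M_{2,E}\norm{x-y}$ on the segment $[\theta^\star,\theta_{\mathrm{ref}}]$. That segment lies in the neighborhood because both endpoints are within $O(\eta)$ of $\theta_0$: $\theta_{\mathrm{ref}}$ by definition, and $\theta^\star$ by Step 2. Multiplying the two $O(\eta^2)$ factors from Step 2 yields $O(\eta^4)$.

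\textbf{Main obstacle.} The step needing the most care is confirming that Lemma~\ref{lem:shared_drift} holds with constants uniform in $\eta$ and independent of $t^\star$. Concretely, the implied constants come from $M_3$, $\sup\norm{H_A},\sup\norm{H_B}$, and $\norm{g_A(\theta_0)},\norm{g_B(\theta_0)}$. I would make this explicit by re-deriving $\thetaAB=\theta_{\mathrm{ref}}+\eta^2H_Bg_A+O(\eta^3)$ from a first-order Taylor expansion of $g_B$ at $U_A(\theta_0)$ with an integral remainder. The rest of the argument is routine once this is in place.
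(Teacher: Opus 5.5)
Your proposal is correct and matches the paper's proof step for step. Both use the scalar mean value theorem along $t\mapsto L_E(\thetaBA+td)$, localize $\theta^\star$ within $O(\eta^2)$ of $\theta_{\mathrm{ref}}$ via Lemma~\ref{lem:shared_drift}, bound $g_E$ using $\norm{H_E}\le M_{2,E}$, and take $\norm{d}=O(\eta^2)$ from Lemma~\ref{lem:theta-commutator}. Your convex-combination bound and your check that the segment $[\theta^\star,\theta_{\mathrm{ref}}]$ stays in the neighborhood only make explicit what the paper leaves implicit.
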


\begin{remark}[What Trotter is correcting]
\label{rem:trotter_corrects}
Proposition~\ref{prop:drift_matched} isolates the \emph{target-gradient drift} term:
evaluating $g_E$ at $\theta_0$ instead of $\theta_{\mathrm{ref}}$ incurs an error
of order $O(\eta)\times O(\eta^2)=O(\eta^3)$ when paired with
the commutator displacement.
In contrast, $\theta^\star$ is $O(\eta^2)$-close to $\theta_{\mathrm{ref}}$, so replacing
$g_E(\theta^\star)$ by $g_E(\theta_{\mathrm{ref}})$ contributes only $O(\eta^4)$.
The remaining $O(\eta^3)$ approximation error in the deployable score is then dominated by
approximating $d$ by the second-order bracket $\eta^2 b_{AB}(\theta_0)$ (Lemma~\ref{lem:theta-commutator}),
rather than by drift of $g_E$ along the shared first-order trajectory.
\end{remark}
We then define three estimators of $\sigma_{AB}^{(E)}$ by choosing different
approximations to the \emph{average} target gradient along this shared drift:

\paragraph{Base (single-point).} We define
\begin{equation}
\hat\sigma_{AB}^{(E)}(\mathrm{base})
:= \inner{g_E(\theta_0)}{b_{AB}(\theta_0)}.
\label{eq:sigma-base}
\end{equation}

\paragraph{Trotter reference (single-point at $\theta_{\mathrm{ref}}$)~\citep{trotter1959product}.} We define
\begin{equation}
\hat\sigma_{AB}^{(E)}(\mathrm{trotter})
:= \inner{g_E(\theta_{\mathrm{ref}})}{b_{AB}(\theta_0)}.
\label{eq:sigma-trotter}
\end{equation}

\paragraph{Trapezoid reference (two-point average; ablation).} We define
\begin{equation}
\hat\sigma_{AB}^{(E)}(\mathrm{trap\_local})
:= \frac{1}{2}\inner{g_E(\theta_0)+g_E(\theta_{\mathrm{ref}})}{b_{AB}(\theta_0)}.
\label{eq:trap-local}
\end{equation}

All three estimators are \emph{deployable}: they do not require running both orders
to obtain $\thetaAB$ and $\thetaBA$.
They differ only in where they evaluate the target gradient $g_E$.
Empirically, performance varies by model and operating point:
the single-point Trotter estimator is the most reliable in the harder settings
(e.g., SmolLM3), while trapezoid averaging can help in some models.
We also evaluate \emph{oracle} estimators that use the true endpoints $\thetaAB, \thetaBA$
(see Appendix~\ref{app:oracle}); these serve as upper bounds but are not deployable for order selection.

\paragraph{Stochastic approximation.}
In practice we use minibatch gradients; all reported results use the same minibatch draws for both orders to isolate order-dependence from sampling noise.

\subsection{Projected Computation in a Trainable Subspace}

Our estimator never forms a Hessian matrix.
It only requires two Hessian--vector products (HVPs), $H_B g_A$ and $H_A g_B$,
computed with respect to the parameters we actually fine-tune.
We therefore restrict all computations to a \emph{fixed trainable parameter subset} (a coordinate subspace)---the parameters that will be updated---and
treat all other parameters as frozen.

For LLMs, we select only the \emph{final transformer layer's} output projection (\texttt{o\_proj}) and
feed-forward down projection (\texttt{down\_proj}), which is $\sim$1--2\% of parameters.
We choose post-attention linear layers so HVPs avoid higher-order differentiation through attention.
For the diffusion model, we select all convolutional layers (excluding attention and normalization),
comprising the majority of UNet parameters.
These choices align with standard fine-tuning practice for each architecture:
final-layer tuning for LLMs is a simple parameter-efficient regime (comparable in spirit to adapters/LoRA),
while full-conv tuning for UNets matches typical diffusion domain adaptation.

Let $S \subseteq \R^p$ denote the coordinate subspace of trainable parameters and $P_S$ the corresponding fixed linear projection.
We compute projected gradients and Hessian-vector products:
$
g_D^S := P_S g_D,\quad
H_D^S v := P_S(H_D(P_S v)).
$
The HVP uses Pearlmutter's trick~\citep{pearlmutter1994fast}: $Hv = \nabla(\nabla L \cdot v)$,
requiring only a standard double backward rather than explicit Hessian materialization.
HVPs can also be computed in a distributed manner for large Transformer models~\citep{granziol2025hessformerhessiansfoundationscale}.
In general, an HVP costs a small constant factor more than a gradient; in our subspace-restricted
implementation, each HVP is approximately $2\times$ a gradient on the selected parameters.
We validated that exact second-order information is important: finite-difference HVP approximation degrades accuracy by 4--12pp (Appendix~\ref{app:fd_hvp}), confirming that precise curvature is necessary to capture the commutator geometry.
The bracket vector is then $b_{AB}^S := H_B^S g_A^S - H_A^S g_B^S$,
and all scores $\sigma$ are computed using these projected quantities.

This subspace restriction makes the planner naturally compatible
with parameter-efficient fine-tuning methods.

\paragraph{Scope of validation.}
In our experiments, we train and evaluate within the same parameter subset:
both the ground-truth order effects (from running both orders) and the
predicted commutator scores are computed on the selected trainable parameters.
This validates order prediction for parameter-efficient fine-tuning scenarios,
where practitioners need to choose an order for adapter/LoRA/final-layer training.
To test robustness beyond the tiny final-layer subset, we also run an expanded multi-layer validation on Llama-3.2-1B tuning the last 8 transformer layers (layers 8--15; \texttt{o\_proj}+\texttt{down\_proj} in each), achieving 825/1020 sign accuracy (80.9\%) in the larger multi-seed evaluation (Appendix~\ref{app:fullmodel}).
A small eager-attention ablation further suggests that excluding attention input projections is mainly a systems constraint rather than a theoretical one: on the 50-triple probe, post-attention layers alone and an eager-attention subset including \texttt{q\_proj}, \texttt{k\_proj}, \texttt{v\_proj}, \texttt{o\_proj}, and \texttt{down\_proj} both achieve 92.0\% sign accuracy.

\section{Method: Transfer-Order Planner}
\label{sec:method}

We now present the complete transfer-order planning algorithm.
Given source domains $A, B$, target domain $E$, initial parameters $\theta_0$,
step size $\eta$, and trainable parameter subset $S$, our planner outputs a predicted
optimal order, a stakes score, and a confidence score.

\begin{algorithm}[t]
\caption{Transfer-Order Planner (Trotter / endpoint reference point)}
\label{alg:planner}
\begin{algorithmic}[1]
\REQUIRE Sources $A, B$, target $E$, params $\theta_0$, step size $\eta$, subset $S$
\ENSURE Predicted order, stakes $s$, confidence $c$
\STATE Compute $g_A \leftarrow P_S \nabla L_A(\theta_0)$
\STATE Compute $g_B \leftarrow P_S \nabla L_B(\theta_0)$
\STATE \textit{// Hessian-vector products}
\STATE $H_B g_A \leftarrow P_S \nabla^2 L_B(\theta_0) \cdot g_A$
\STATE $H_A g_B \leftarrow P_S \nabla^2 L_A(\theta_0) \cdot g_B$
\STATE $b \leftarrow H_B g_A - H_A g_B$ \hfill \textit{// Bracket vector}
\STATE \textit{// Reference point for shared drift}
\STATE $\theta_{\mathrm{ref}} \leftarrow \theta_0 - \eta(g_A + g_B)$
\STATE $g_E^{(\mathrm{ref})} \leftarrow P_S \nabla L_E(\theta_{\mathrm{ref}})$
\STATE $\hat\sigma \leftarrow \inner{g_E^{(\mathrm{ref})}}{b}$ \hfill \textit{// Trotter score}
\STATE $s \leftarrow \eta^2 |\hat\sigma|$ \hfill \textit{// Stakes (predicted effect magnitude)}
\STATE $c \leftarrow |\hat\sigma| / (\|g_E^{(\mathrm{ref})}\| \|b\|)$ \hfill \textit{// Confidence (alignment $|\hat\omega|$)}
\IF{$\hat\sigma > 0$}
  \STATE \textbf{return} $B \to A$, $s$, $c$
\ELSE
  \STATE \textbf{return} $A \to B$, $s$, $c$
\ENDIF
\end{algorithmic}
\end{algorithm}

\paragraph{Compute cost.}
Let $C_g$ denote one gradient evaluation and $C_h$ one Hessian-vector product (HVP) in the chosen trainable subspace, with $\rho=C_h/C_g\approx 2$ in our implementation.
Algorithm~\ref{alg:planner} computes three gradients and two HVPs, so $C_{\mathrm{plan}}=(3+2\rho)C_g$ for one pair.
Trying both $k$-step orders costs $C_{\mathrm{both}}(k)=4kC_g$ before final evaluation.
Thus the planner is cheaper for decision-only order selection when $k>(3+2\rho)/4$ and cheaper end-to-end when $k>(3+2\rho)/2$.
With $\rho\approx2$, this predicts a practical crossover around $k\approx3$--4; the wall-clock measurements in Appendix~\ref{app:wallclock} match this prediction.

\subsection{Lie-Bracket Tournaments for $N$ Sources}
\label{sec:tournament}

The pairwise planner induces a many-domain scheduler by treating every pairwise preference as an edge in a weighted tournament.
Fix a target domain $E$ and candidate source set $\mathcal{D}=\{D_1,\ldots,D_N\}$.
Let $g_i:=g_{D_i}(\theta_0)$ and $H_i:=H_{D_i}(\theta_0)$, and choose a shared target-gradient reference $g_E^{\rm ref}$.
For an ordered pair $(i,j)$, the pairwise bracket score is
\begin{equation}
\sigma_{ij}^{(E)} := \inner{g_E^{\rm ref}}{H_jg_i-H_ig_j}.
\label{eq:tournament_sigma}
\end{equation}
Because $\sigma_{ij}^{(E)}<0$ predicts $D_i\to D_j$ and $\sigma_{ij}^{(E)}>0$ predicts $D_j\to D_i$, it is convenient to define the tournament edge
\begin{equation}
W_{ij}^{(E)} := -\sigma_{ij}^{(E)}
= \inner{g_E^{\rm ref}}{H_ig_j-H_jg_i},
\label{eq:tournament_edge}
\end{equation}
so $W_{ij}^{(E)}>0$ means that $D_i$ should precede $D_j$ for target $E$.
The matrix $W^{(E)}$ is skew-symmetric up to stochastic estimation error and defines a weighted tournament over the source domains.
We rank this tournament by the classical Borda/row-sum score~\citep{borda1781memoire,brandt2016handbook}:
\begin{equation}
r_i^{(E)} := \sum_{j\ne i} W_{ij}^{(E)},
\label{eq:borda_score}
\end{equation}
placing larger $r_i^{(E)}$ earlier.
For small $N$, we can also score complete permutations by the same pairwise surrogate $S(\pi)=\sum_{a<b}W_{\pi(a),\pi(b)}^{(E)}$ and compare all schedules.

The main computational identity is Hessian symmetry.
Define
\begin{equation}
u_i^{(E)} := H_i g_E^{\rm ref}.
\label{eq:tournament_ui}
\end{equation}
Then
\begin{equation}
W_{ij}^{(E)}
=\inner{g_j}{u_i^{(E)}}-\inner{g_i}{u_j^{(E)}}.
\label{eq:hessian_symmetry}
\end{equation}
Thus all pairwise edges can be filled from $g_i$ and $u_i^{(E)}$ using dot products.
For Borda/row-sum ranking, even the $O(N^2)$ edge matrix is unnecessary: if
\begin{equation}
G:=\sum_j g_j,\qquad U^{(E)}:=\sum_j u_j^{(E)},
\end{equation}
then self-terms cancel and
\begin{equation}
\begin{aligned}
r_i^{(E)}
&= \sum_j \inner{g_j}{u_i^{(E)}}
   - \sum_j \inner{g_i}{u_j^{(E)}} \\
&= \inner{G}{u_i^{(E)}}-\inner{g_i}{U^{(E)}}.
\end{aligned}
\label{eq:borda_linear}
\end{equation}
For a fixed target and shared target-gradient reference, Borda score computation therefore needs $N$ source gradients, $N$ HVPs $u_i^{(E)}=H_i g_E^{\rm ref}$, two vector sums, and $2N$ dot products, followed by an $O(N\log N)$ sort.
In all large-$N$ tournament experiments we set the shared target reference to the initialization, $\theta_E^{\rm ref}=\theta_0$, so $g_E^{\rm ref}=g_E(\theta_0)$.
This choice is deliberate: each edge $W_{ij}$ then depends only on domains $i,j$ and target $E$, making existing edges $N$-independent and enabling the row-sum factorization in Eq.~\eqref{eq:borda_linear}.
A drift-matched reference would depend on the chosen pair or on the whole source set, which would break this factorization and reintroduce $O(N^2)$ reference scoring.
The reduction is exact for Borda/row-sum aggregation with this shared reference; exact tournament optimization, local-search variants, or pair-specific Trotter references require materializing or querying the pairwise matrix.

\begin{algorithm}[t]
\caption{Lie-Bracket Tournament Ranking}
\label{alg:tournament}
\begin{algorithmic}[1]
\REQUIRE Sources $D_1,\ldots,D_N$, target $E$, params $\theta_0$, subset $S$, shared target reference $\theta_E^{\rm ref}$
\ENSURE Curriculum ranking of the $N$ sources
\STATE Compute $g_E^{\rm ref}\leftarrow P_S\nabla L_E(\theta_E^{\rm ref})$
\FOR{$i=1,\ldots,N$}
  \STATE $g_i\leftarrow P_S\nabla L_{D_i}(\theta_0)$
  \STATE $u_i\leftarrow P_S\nabla^2 L_{D_i}(\theta_0)\,g_E^{\rm ref}$ \hfill \textit{// one HVP per source}
\ENDFOR
\STATE $G\leftarrow\sum_i g_i$, \quad $U\leftarrow\sum_i u_i$
\FOR{$i=1,\ldots,N$}
  \STATE $r_i\leftarrow\inner{G}{u_i}-\inner{g_i}{U}$ \hfill \textit{// Borda/row-sum score}
\ENDFOR
\STATE \textbf{return} sources sorted by decreasing $r_i$
\end{algorithmic}
\end{algorithm}

Algorithm~\ref{alg:tournament} is a local second-order surrogate for sequence-level performance.
For $N{=}3$, we can evaluate the complete surrogate over all $3!$ permutations; for large $N$, Borda/row-sum ranking gives a scalable tournament order.
The large-$N$ tournament and the pairwise planner therefore use the same bracket vector but different target-gradient references: Algorithm~\ref{alg:planner} uses a drift-matched Trotter point for the most accurate two-domain sign decision, while Algorithm~\ref{alg:tournament} uses the shared $\theta_0$ reference to obtain $N$-independent edges and linear row-sum scoring.
Drift matching is an $O(\eta^3)$ correction to an individual pairwise loss difference; Borda aggregation asks for a robust ranking over many local edges, and the reported tournament experiments test whether the cheaper shared-reference edges remain informative enough to beat random curricula and first-order baselines.

\paragraph{Confidence gating.}
We distinguish between \emph{stakes} (how much the order matters) and \emph{confidence} (how reliable the sign prediction is).
The stakes score $s=\eta^2|\hat\sigma|$ estimates the effect magnitude $|\Delta_E|$.
For confidence we use the scale-free alignment score $|\hat\omega|$, which is large when $g_E$ and the bracket are nearly collinear and small when they are nearly orthogonal.
High $|\hat\omega|$ cases are empirically much more stable across seeds.
\begin{enumerate}
\item \textbf{Effect magnitude:} Large $s$ indicates the order choice has
  substantial impact on target loss.
\item \textbf{Prediction reliability:} Large $|\hat\omega|$ indicates strong directional alignment between $g_E$ and the bracket, yielding reliable sign prediction.
\end{enumerate}
We recommend deploying the planner when $|\hat\omega|$ exceeds a coverage threshold; low-confidence cases have small effects where either order suffices.
In Table~\ref{tab:main_results}, Acc@25\% reports \emph{trotter} accuracy on the top quartile ranked by actual impact $|\Delta_E|$.

\paragraph{Step-size selection ($\eta$-autopilot).}
The reference point $\theta_{\mathrm{ref}}$ and the predicted stakes
$s=\eta^2|\hat\sigma|$ depend on the step size $\eta$, which must be large enough
for order effects to be detectable but small enough that the score's sign is
stable beyond the second-order expansion.
For practitioners who prefer simplicity, a coarse grid search over $\eta$ also works well (Appendix~\ref{app:eta_sweeps}).
However, to avoid per-model tuning, we develop a theory-driven autopilot that selects $\eta$ automatically from $\sim\!40$ pilot triples ($\sim\!10$ minutes) by combining a detectability floor with sign-validity ceilings.
Let $S_q$ be a high quantile of $|\hat\sigma|$ over pilot triples and let
$\sigma_{\mathrm{eff}}$ denote a conservative standard error (in units of loss) for the paired difference
$\Delta_E(A,B)$, estimated from repeated AB/BA evaluations at a small reference step $\eta_{\mathrm{ref}}$.
Modeling $\Delta_E(A,B)\approx \eta^2\sigma+\varepsilon$ with $\operatorname{SE}(\varepsilon)\approx\sigma_{\mathrm{eff}}$,
we require a typical high-signal effect $\eta^2 S_q$
to exceed a noise threshold $z\,\sigma_{\mathrm{eff}}$, yielding
\begin{equation}
\eta^2 S_q \ge z\,\sigma_{\mathrm{eff}}
\quad\Longrightarrow\quad
\eta_{\min}:=\sqrt{\frac{z\,\sigma_{\mathrm{eff}}}{S_q}}.
\label{eq:eta-min}
\end{equation}
We then choose $\eta$ using the implementation-matched cube-root policy in Appendix~\ref{app:eta_autopilot}.
Let $\eta_{\mathrm{sign}}$ be the aggregated sign-validity ceiling and let $\eta_{\mathrm{tradeoff}}$ be a BCH-based ceiling with modest headroom.
The selected step is always clamped by sign-validity; in the LLM runs, an entanglement-aware N-finder is active and the slack-ratio branch uses cube-root interpolation rather than collapsing to the bare detectability floor.
When pilot triples underestimate evaluation-scale effects, the implemented floor applies the corresponding correction $\eta_{\min}\leftarrow \eta_{\min}/\sqrt{\hat N_{\rm eff}}$ before the regime gates.
Full code-matched details are in Appendix~\ref{app:eta_autopilot}.

\section{Experiments}
\label{sec:experiments}

\subsection{Experimental Setup}
\label{sec:setup}

\paragraph{Models and training regimes.}
We evaluate the same bracket primitive in four regimes.
First, for instruction-SFT we fine-tune Qwen2.5-1.5B on Dolly-style task categories treated as domains~\citep{conover2023free}.
Second, for DPO we treat UltraFeedback preference subsets as source domains in an offline preference-optimization setting~\citep{cui2023ultrafeedback,rafailov2023dpo}.
Third, for next-token pretraining-domain adaptation we evaluate Qwen2.5-1.5B~\citep{yang2024qwen25}, Llama-3.2-1B~\citep{meta2024llama32}, Llama-3.1-8B~\citep{grattafiori2024llama3}, and SmolLM3-3B~\citep{bakouch2025smollm3}.
Fourth, we evaluate a DDPM UNet~\citep{ho2020denoising} on class-conditioned diffusion adaptation.
The SFT and DPO settings are the most direct post-training applications; the original Pile-style pretraining-domain and diffusion settings test that the same geometric primitive also covers the earlier domain-adaptation case and architectural breadth.

\paragraph{Domains.}
For the original pairwise LLM experiments, we use The Pile dataset~\citep{gao2021pile} via the \texttt{ola13/small-the\_pile} Hugging Face subset.
After filtering domains with fewer than 40 available samples, 17 domains remain: Pile-CC, OpenWebText2, PubMed Abstracts, StackExchange, Github, Wikipedia (en), USPTO Backgrounds, PubMed Central, FreeLaw, NIH ExPorter, ArXiv, DM Mathematics, HackerNews, Enron Emails, OpenSubtitles, Books3, and YoutubeSubtitles.
For each target $E$, we sample 12 random source pairs $(A,B)$ with $A\ne B$ and $A,B\ne E$, yielding $17\times12=204$ triples per experiment.
For diffusion, CIFAR-10 classes are domains; fixing a target class gives $\binom{9}{2}=36$ source pairs.
Large-$N$ scheduling uses MMLU subjects~\citep{hendrycks2021mmlu}, Stack/programming-language domains, and Dolly task categories.

\paragraph{Numerical precision and metrics.}
HVPs are computed in float32 for stability: Llama-3.2-1B and the diffusion model run in full float32, while larger LLMs keep the model in bfloat16 but cast the selected trainable parameter subset to float32 for HVP computation.
We report sign accuracy, regret $L_E(\theta_{\mathrm{chosen}})-\min\{L_E(\theta_{AB}),L_E(\theta_{BA})\}$, BCH validation $\cos(\theta_{AB}-\theta_{BA},\eta^2b)$, and for many-domain schedules percentile against sampled random curricula.
Step sizes in the pairwise table are selected by the theory-driven $\eta$-autopilot from a small disjoint pilot set.

\subsection{Post-Training: Instruction-SFT and DPO}
\label{sec:post_training}

A central broadening beyond the original domain-adaptation setting is post-training.
We test whether the same bracket score predicts order effects for instruction supervised fine-tuning and preference optimization, using Qwen2.5-1.5B in float32 and held-out target losses.
These experiments are important because SFT and DPO use different losses and sharper task/domain boundaries than Pile-style next-token domain adaptation.

\paragraph{Instruction-SFT.}
We use Dolly-15k task categories~\citep{conover2023free} as source and target domains.
Across 480 triples over five seeds, the pairwise bracket planner achieves 471/480 = 98.1\% sign accuracy at $k{=}1$, mean Spearman 0.940, and 96.3\% recovered fraction.
When the same $k{=}1$ predictor is held fixed and ground truth is evaluated after $k{=}20$ SFT steps per domain, accuracy remains 351/480 = 73.1\% with 58.0\% regret reduction.

\paragraph{DPO / offline preference optimization.}
We use UltraFeedback source domains~\citep{cui2023ultrafeedback} and the DPO loss~\citep{rafailov2023dpo}, excluding within-\texttt{flan\_v2} pairs that share provenance and create near-degenerate brackets.
Across 356 triples over four seeds, the pairwise bracket planner reaches 352/356 = 98.9\% sign accuracy at $k{=}1$ and 97.8\% recovered fraction.
At $k{=}20$, using the same fixed initial predictor, accuracy is 257/356 = 72.2\% with 54.7\% regret reduction.
We describe this as offline preference optimization rather than online RLHF: it tests whether the bracket transfers to preference landscapes, while online sampling and policy-optimization effects are left for future work.

\begin{table}[t]
\centering
\caption{Post-training pairwise prediction. The $k{=}20$ columns evaluate 20 steps per source while keeping the initial bracket predictor fixed.}
\label{tab:posttrain}
\small
\setlength{\tabcolsep}{2.8pt}
\resizebox{\columnwidth}{!}{%
\begin{tabular}{lcccc}
\toprule
Setting & $k{=}1$ accuracy & Rank quality & $k{=}20$ accuracy & Regret red. \\
\midrule
Instruction-SFT & 471/480 = 98.1\% & Spearman 0.940 & 351/480 = 73.1\% & 58.0\% \\
DPO & 352/356 = 98.9\% & recovered 97.8\% & 257/356 = 72.2\% & 54.7\% \\
\bottomrule
\end{tabular}%
}
\end{table}

\subsection{Pairwise Pretraining-Domain Adaptation and Long Horizons}
\label{sec:pairwise_horizon}

The tournament construction depends on the pairwise bracket score being reliable.
Table~\ref{tab:main_results} preserves the original cross-architecture validation on Pile-style next-token pretraining-domain adaptation and diffusion: the Trotter planner achieves strong overall sign accuracy and improves sharply on high-impact decisions across LLM families and diffusion.
This validates the pairwise score as the primitive used by the tournament.

\begin{table}[t]
\centering
\caption{Original pretraining-domain and diffusion validation. LLM rows use Pile-style next-token domains and DDPM uses class-conditioned diffusion domains; accuracies are mean $\pm$ std over 5 seeds. Acc@25\% is Trotter accuracy on the 25\% highest-impact decisions ranked by $|\Delta_E|$.}
\label{tab:main_results}
\small
\setlength{\tabcolsep}{3.0pt}
\resizebox{\columnwidth}{!}{%
\begin{tabular}{lcccc}
\toprule
Model & Trotter Acc. & Cosine Acc. & Acc.@25\% & $\eta$ \\
\midrule
Qwen2.5-1.5B & 87.7\%$\pm$1.8 & 61.8\% & 95.7\%$\pm$1.5 & 0.00203 \\
Llama-3.2-1B & 92.0\%$\pm$2.2 & 77.9\% & 82.4\%$\pm$7.9 & 0.00105 \\
Llama-3.1-8B & 82.4\%$\pm$2.4 & 67.2\% & 94.9\%$\pm$1.6 & 0.00133 \\
SmolLM3-3B & 87.2\%$\pm$2.6 & 61.3\% & 86.3\%$\pm$2.5 & 0.00178 \\
DDPM UNet & 91.1\%$\pm$3.7 & 88.9\% & 100.0\%$\pm$0.0 & 0.0906 \\
\bottomrule
\end{tabular}%
}
\end{table}

The theory is local, so a key empirical question is whether the score survives beyond one gradient step.
Table~\ref{tab:multistep_new} uses 1,020 Llama-3.2-1B trials, computes the predictor once at $\theta_0$, and evaluates ground truth after $k$ SGD steps per domain.
Accuracy declines gradually with horizon while remaining above random: 81.5\% at $k{=}20$ and 65.3\% at $k{=}50$ (100 updates total), with regret reduction still above random.

\begin{table}[t]
\centering
\caption{Long-horizon generalization on Llama-3.2-1B over 1,020 trials. The $k{=}1$ predictor is computed once at $\theta_0$; ground truth uses $k$ steps per domain.}
\label{tab:multistep_new}
\small
\setlength{\tabcolsep}{3.8pt}
\begin{tabular}{lccc}
\toprule
Horizon & Correct/total & Accuracy & Regret red. \\
\midrule
$k{=}1$ & 949/1020 & 93.0\% & 64.0\% \\
$k{=}5$ & 962/1020 & 94.3\% & 97.4\% \\
$k{=}10$ & 912/1020 & 89.4\% & 93.1\% \\
$k{=}20$ & 831/1020 & 81.5\% & 86.3\% \\
$k{=}50$ & 666/1020 & 65.3\% & 54.7\% \\
\bottomrule
\end{tabular}
\end{table}

\subsection{Many-Domain Scheduling by Lie-Bracket Tournament}
\label{sec:large_n_results}

The tournament layer asks a different question from pairwise sign prediction: can local pairwise geometry produce a useful complete curriculum?
For $N{=}3$, the answer can be tested exactly because all $3!$ schedules are evaluated for ground truth.
The complete pairwise surrogate recovers the best schedule in 175/200 cases (87.5\%), places the best schedule in the top two in 192/200 cases (96.0\%), obtains mean Spearman 0.929, and reduces regret by 93.5\%.
The scalable Borda/row-sum variant, which is the one used for large $N$, still recovers the best $N{=}3$ order in 79.0\% of cases, far above the 16.7\% random top-1 rate.
Only 1.5\% of quadruples show cyclic intransitivity, and those cycles concentrate near indifferent pairs.

For larger $N$, exhaustive evaluation of all schedules is impossible, so we compare the tournament-selected order to 500 sampled random curricula and to gradient-norm baselines.
The main metric is therefore a \emph{sampled percentile}, not top-1 accuracy over all $N!$ schedules.
On MMLU subject curricula, the bracket tournament reaches the 99.0--99.6th sampled percentile at $N{=}56,k{=}1$ across the reported seeds, while the reported descending gradient-norm baseline is below the first percentile.
We interpret this as a limitation of magnitude-only scheduling rather than as a claim about every possible norm orientation: gradient norm is not target-conditioned, whereas the bracket edge uses the target projection of non-commutativity.
On Stack programming-language curricula, the $N{=}85,k{=}1$ bracket order reaches the 99th sampled percentile for a Python target, versus the 69th percentile for the descending gradient-norm baseline.
The same construction also performs strongly at intermediate sizes: MMLU $N{=}30,k{=}5$ reaches the 89--96th sampled percentile while gradient-norm ordering reaches only 2--22, Stack $N{=}10,k{=}5$ reaches the 86th versus 47, and a Dolly summarization target reaches the 99.8th versus 49.9.

\begin{table}[t]
\centering
\caption{Large-$N$ tournament scheduling. Percentiles are relative to 500 sampled random curricula; ``Grad-norm'' is descending gradient-norm ordering.}
\label{tab:tournament_results}
\small
\setlength{\tabcolsep}{3.2pt}
\resizebox{\columnwidth}{!}{%
\begin{tabular}{lccc}
\toprule
Dataset / target & $N,k$ & Bracket pct. & Grad-norm pct. \\
\midrule
MMLU subjects & $56,1$ & 99.0--99.6 & $<1$ \\
MMLU subjects & $30,5$ & 89--96 & 2--22 \\
Stack / Python & $85,1$ & 99 & 69 \\
Stack / Python & $10,5$ & 86 & 47 \\
Dolly / summarization & $7,5$ & 99.8 & 49.9 \\
\bottomrule
\end{tabular}%
}
\end{table}

The qualitative message is that the pairwise bracket is an $N$-independent primitive: each edge asks the same local geometric question, and Algorithm~\ref{alg:tournament} turns the resulting tournament into a curriculum without enumerating permutations.
This is where the method moves from a two-order diagnostic to a scalable scheduling procedure.

\subsection{Compute, Optimizer Scope, and Control}
\label{sec:compute_scope}

The measured planner time is slightly slower than brute-force evaluation at $k{=}1$, where both orders are themselves tiny, but it is already faster by $k{=}5$ and about $13\times$ faster than brute-force pilots at $k{=}20$ in the Llama and Qwen wall-clock runs (Appendix~\ref{app:wallclock}).
This addresses the cost of the required HVPs: the second-order information is not free, but it is cheaper than trying both orders at practical horizons.

AdamW is a stateful optimizer, while the clean derivation applies to memoryless gradient-style updates.
We therefore report AdamW as a measured optimizer-scope evaluation and as evidence for the augmented-state direction rather than as the central result: the unmodified SGD-derived predictor reaches 57.1\% at AdamW $k{=}20$ over 1,020 trials with 33.2\% regret reduction, and an exploratory Adam-aware augmented-state score reaches 59.6\% at $k{=}10$ with 29.8\% regret reduction. AdamW suggests that the correct geometric object is an augmented-state commutator on $(\theta,m,v)$, but developing that theory remains future work.
Appendix~\ref{app:bracket_control} gives a local bracket-control ablation.
After the shared-drift update, moving a small distance in the predicted target-loss-decreasing bracket direction improves over the uncorrected shared-drift point in 179/204 cases (87.7\%) and beats both sequential endpoints in 95/204 cases (46.6\%).
This is included as a local sanity check on the sign of the bracket direction; the deployed method in this paper remains order selection and tournament ranking.

The same target-domain formulation makes capability preservation an instance of the same planner.
If $P$ is a protected capability represented by a loss, set $E=P$: the sign predicts which order gives lower protected loss and the stakes score estimates how much protected performance is at risk.
Rehearsal or safety data can also be included as a tournament source when available; no new estimator is required.

\section{Related Work}
\label{sec:related}

\paragraph{Training order, curricula, and sequential transfer.}
Curriculum learning established that training order can affect optimization and generalization~\citep{bengio2009curriculum}, and multi-domain or multilingual systems often depend on when domains are emphasized~\citep{choi2023ordermatters}.
Sequential transfer and gradual domain adaptation study related scheduling questions, often through paths or interpolations between distributions~\citep{ruder2019transfer,kumar2020selftraining,zhuang2024gdagradflow}.
Closest to our geometric view, \citet{rukhovich2025commute} project the Lie bracket of two domains' gradient fields onto a target-loss gradient---the same bracket-projection construction as our $\sigma=\inner{g_E}{b}$---but use it as a local optimality criterion that, as they note, is descriptive rather than an ordering algorithm.
We instead make that primitive a deployable, held-out predictor: we evaluate the target gradient at the shared-drift Trotter reference rather than at $\theta_0$, capturing the $O(\eta^3)$ correction a single-point criterion misses, and add stakes/confidence estimates, subspace HVP computation, and $\eta$-autopilot calibration, validated across four model families, instruction-SFT, DPO, and a diffusion UNet rather than a single bilingual-pretraining setting.
Most distinctively, a shared $\theta_0$ reference turns the same bracket primitive---not a per-pair drift-matched score---into an $N$-independent weighted tournament whose Borda ranking costs one Hessian-vector product per source, scheduling curricula over dozens of domains.

\paragraph{Gradient diagnostics, multi-task optimization, and HVPs.}
Multi-task methods combine or modify gradients within each update, including adaptive loss reweighting~\citep{chen2018gradnorm}, game-theoretic formulations~\citep{navon2022bargaining}, gradient surgery~\citep{yu2020gradient}, and multilingual variants~\citep{wang2021gradientvaccine,lee2022sequentialreptile}.
Gradient-similarity metrics have also been used for task conflicts, forgetting, curricula, and task-order selection~\citep{igarashi2022mtcl,nguyen2025sequence,imanov2026mechanisticanalysiscatastrophicforgetting}.
Our cosine baseline confirms that first-order alignment is informative but insufficient: on Llama-3.2-1B it reaches 77.5\% accuracy, leaving a 16.2pp gap to the Lie-bracket score (Appendix~\ref{app:cosine_baseline}).
The additional cost is Hessian-vector products, which are computable without forming the Hessian~\citep{pearlmutter1994fast} and connect this work to influence-function and geometric numerical perspectives~\citep{koh2017influence,pruthi2020tracing,hairer2006geometric}. Complementary to our domain-ordering problem, \citet{sweeney2026geometryupdates} studies update geometry at vocabulary scale through Fisher alignment of shared-output model families; our analysis instead operates at the level of domain interactions and asks how non-commuting training operators compose across sequential curricula.

\section{Conclusion}
\label{sec:conclusion}

We presented a geometric approach to sequential learning based on Lie-bracket commutators of gradient update fields.
A directional score projects $H_Bg_A-H_Ag_B$ onto the target gradient to choose between two domains and attach stakes/confidence estimates.
For many domains, a shared-reference variant turns the same bracket primitive into a Lie-Bracket Tournament with one HVP per source and no enumeration of $N!$ schedules.
Across pretraining-domain adaptation, instruction-SFT, DPO, and diffusion adaptation, this primitive predicts pairwise order effects, composes into strong sequence-level rankings, and is faster than brute-force pilots at practical horizons.

\section*{Impact Statement}
Sequential fine-tuning is ubiquitous, yet the default practice of testing multiple orders doubles compute costs, with associated energy consumption and carbon footprint.
Our planner enables principled ordering decisions without running both orders, reducing unnecessary GPU hours.
This is particularly valuable in resource-constrained settings where exhaustive ablations on large models are infeasible; confidence gating further indicates when predictions can be trusted directly versus when validation is warranted.

More broadly, this work takes a step toward a geometric theory of curriculum learning.
The Lie-bracket lens reveals that training-order effects are not arbitrary but governed by structured second-order interactions.
The tournament extension suggests a practical path from pairwise geometry to many-domain scheduling, while the optimizer-scope evaluations clarify when additional validation is prudent.

\bibliography{references}
\bibliographystyle{icml2026}

\newpage
\appendix
\onecolumn

\section{Proofs}
\label{app:proofs}

\subsection{Proof of Lemma~\ref{lem:theta-commutator}}
\label{app:proof_theta_commutator}

\begin{proof}
Write the update maps as $U_A(\theta)=\theta-\eta g_A(\theta)$ and $U_B(\theta)=\theta-\eta g_B(\theta)$.
Then
\begin{align*}
\thetaAB &= U_B(U_A(\theta_0)) \\
&= \theta_0 - \eta g_A(\theta_0) - \eta g_B(\theta_0 - \eta g_A(\theta_0)), \\
\thetaBA &= U_A(U_B(\theta_0)) \\
&= \theta_0 - \eta g_B(\theta_0) - \eta g_A(\theta_0 - \eta g_B(\theta_0)).
\end{align*}
Using first-order Taylor expansion of $g_B$ around $\theta_0$ with integral remainder,
\[
g_B(\theta_0 - \eta g_A(\theta_0))
=
g_B(\theta_0) - \eta H_B(\theta_0)\,g_A(\theta_0) + R_B,
\]
where
\[
\begin{aligned}
R_B
&= \int_0^1 \Bigl(H_B(\theta_0 - t\eta g_A(\theta_0)) - H_B(\theta_0)\Bigr) \\
&\quad \cdot (-\eta g_A(\theta_0))\,dt.
\end{aligned}
\]
By the Lipschitz Hessian assumption,
$\norm{H_B(\theta_0 - t\eta g_A)-H_B(\theta_0)} \le M_3\,t\eta\norm{g_A(\theta_0)}$,
so
\[
\begin{aligned}
\norm{R_B}
&\le \int_0^1 M_3\,t\eta\norm{g_A(\theta_0)}\cdot \eta\norm{g_A(\theta_0)}\,dt \\
&= \frac{M_3}{2}\,\eta^2\,\norm{g_A(\theta_0)}^2.
\end{aligned}
\]
Multiplying by the outer $\eta$ that appears in $\thetaAB$ yields an $O(\eta^3)$ contribution.

Analogously,
\[
\begin{aligned}
g_A(\theta_0 - \eta g_B(\theta_0))
&= g_A(\theta_0) - \eta H_A(\theta_0)\,g_B(\theta_0) + R_A, \\
\norm{R_A} &\le \frac{M_3}{2}\,\eta^2\,\norm{g_B(\theta_0)}^2.
\end{aligned}
\]
Substituting into $\thetaAB$ and $\thetaBA$ gives
\begin{align*}
\thetaAB
&= \theta_0 - \eta(g_A(\theta_0)+g_B(\theta_0)) + \eta^2 H_B(\theta_0)g_A(\theta_0) - \eta R_B, \\
\thetaBA
&= \theta_0 - \eta(g_A(\theta_0)+g_B(\theta_0)) + \eta^2 H_A(\theta_0)g_B(\theta_0) - \eta R_A,
\end{align*}
and hence
\[
\begin{aligned}
\thetaAB-\thetaBA
&= \eta^2\bigl(H_B(\theta_0)g_A(\theta_0)-H_A(\theta_0)g_B(\theta_0)\bigr) \\
&\quad + \eta(R_A-R_B).
\end{aligned}
\]
Using the bounds on $R_A,R_B$ yields $\norm{\eta(R_A-R_B)} \le C\eta^3$
with $C=\frac{M_3}{2}(\norm{g_A(\theta_0)}^2+\norm{g_B(\theta_0)}^2)$.
\end{proof}

\subsection{Proof of Proposition~\ref{prop:delta-omega}}
\label{app:proof_delta_omega}

\begin{proof}
Let $d := \thetaAB-\thetaBA$.
By the mean value theorem applied to the function $t\mapsto L_E(\thetaBA + t d)$,
there exists $\theta^\star = \thetaBA + t^\star d$ for some $t^\star\in(0,1)$ such that
\[
\Delta_E(A,B) = L_E(\thetaBA+d)-L_E(\thetaBA) = \inner{g_E(\theta^\star)}{d}.
\]
Lemma~\ref{lem:theta-commutator} gives $d = \eta^2 b_{AB}(\theta_0) + r_{AB}$ with $\norm{r_{AB}}\le C\eta^3$.
Also, $\thetaBA-\theta_0 = -\eta(g_A(\theta_0)+g_B(\theta_0)) + O(\eta^2)$, so $\norm{\thetaBA-\theta_0}=O(\eta)$.
Since $d=O(\eta^2)$, we have $\theta^\star=\theta_0+O(\eta)$.
Under locally bounded $H_E$, $g_E$ is locally Lipschitz, hence
$g_E(\theta^\star)=g_E(\theta_0)+O(\eta)$.
Therefore,
\[
\begin{aligned}
\Delta_E(A,B)
&= \inner{g_E(\theta_0)}{\eta^2 b_{AB}(\theta_0)} + O(\eta)\cdot O(\eta^2) + O(\eta^3) \\
&= \eta^2 \inner{g_E(\theta_0)}{b_{AB}(\theta_0)} + O(\eta^3),
\end{aligned}
\]
which is Eq.~\eqref{eq:sigma-def}.
\end{proof}

\subsection{Proof of Lemma~\ref{lem:shared_drift}}
\label{app:proof_shared_drift}

\begin{proof}
The expansions for $\thetaAB$ and $\thetaBA$ are derived in the proof of Lemma~\ref{lem:theta-commutator}
(Appendix~\ref{app:proof_theta_commutator}).
Substituting $\theta_{\mathrm{ref}}=\theta_0-\eta\bigl(g_A(\theta_0)+g_B(\theta_0)\bigr)$ yields the stated forms,
and the $O(\eta^2)$ closeness follows since the leading correction terms are $O(\eta^2)$ and the remainder is $O(\eta^3)$.
\end{proof}

\subsection{Proof of Proposition~\ref{prop:drift_matched}}
\label{app:proof_drift_matched}

\begin{proof}
Let $d:=\thetaAB-\thetaBA$.
By the mean value theorem applied to $t\mapsto L_E(\thetaBA + td)$, there exists $\theta^\star\in[\thetaBA,\thetaAB]$
such that $\Delta_E(A,B)=\inner{g_E(\theta^\star)}{d}$.
Lemma~\ref{lem:shared_drift} implies $\thetaAB=\theta_{\mathrm{ref}}+O(\eta^2)$ and $\thetaBA=\theta_{\mathrm{ref}}+O(\eta^2)$,
hence $\theta^\star=\theta_{\mathrm{ref}}+O(\eta^2)$.
Under $\norm{H_E(\theta)}\le M_{2,E}$, $g_E$ is $M_{2,E}$-Lipschitz, so
$\norm{g_E(\theta^\star)-g_E(\theta_{\mathrm{ref}})}\le M_{2,E}\norm{\theta^\star-\theta_{\mathrm{ref}}}=O(\eta^2)$.
Multiplying by $\norm{d}=O(\eta^2)$ (Lemma~\ref{lem:theta-commutator}) gives
$\bigl|\Delta_E(A,B)-\inner{g_E(\theta_{\mathrm{ref}})}{d}\bigr|=O(\eta^4)$.
\end{proof}

\section{Oracle Endpoint Integration Baselines}
\label{app:oracle}

For completeness, we evaluate \emph{oracle} estimators that require computing both
endpoints $\thetaAB$ and $\thetaBA$ (and therefore are not deployable for order selection).
These baselines provide an upper bound on what can be achieved when one is allowed to
probe both training orders.

Define the endpoint gradients $g_E(\thetaAB)$ and $g_E(\thetaBA)$.

\paragraph{Oracle trapezoid.} We define
\begin{equation}
\begin{aligned}
\hat\sigma_{AB}^{(E)}(\mathrm{trap\_oracle})
&:= \tfrac{1}{2}\inner{g_E(\thetaAB)+g_E(\thetaBA)}{b_{AB}(\theta_0)}.
\end{aligned}
\label{eq:trap-oracle}
\end{equation}

\paragraph{Oracle Simpson.} Let $\theta_{\mathrm{mid}} := \tfrac{1}{2}(\thetaAB+\thetaBA)$ (linear interpolation in parameter
space) and compute $g_E(\theta_{\mathrm{mid}})$.
Then
\begin{equation}
\begin{aligned}
\hat\sigma_{AB}^{(E)}(\mathrm{simpson\_oracle})
&:= \tfrac{1}{6}\inner{g_E(\thetaAB) + 4g_E(\theta_{\mathrm{mid}}) + g_E(\thetaBA)}{b_{AB}(\theta_0)}.
\end{aligned}
\label{eq:simpson-oracle}
\end{equation}

\paragraph{Remark (what Simpson does and does not show).}
Simpson's rule is a higher-order quadrature approximation for averaging $g_E$ along the same
interpolation path, but it requires an additional gradient evaluation at $\theta_{\mathrm{mid}}$ and,
crucially, access to both endpoints.
Empirically, $\mathrm{simpson\_oracle}$ is nearly indistinguishable from
$\mathrm{trap\_oracle}$ (see experiments), suggesting that quadrature error along the endpoint segment is not the dominant
source of approximation error in our setting.

\section{Robustness to Domain Sample Size}
\label{app:sample-size}

To verify that our results are not driven by low-sample domains, we partition
triples into those involving low-data domains (Books3: 61 available samples,
YoutubeSubtitles: 77 available samples, out of 80 requested) and those using only
high-data domains.

\begin{table}[t]
\centering
\caption{
Robustness to domain sample size.
We split triples by whether they involve low-sample domains
and report sign accuracy for the deployable trap\_local estimator.
}
\label{tab:low_data_robustness}
\small
\begin{tabular}{lccc}
\toprule
Subset & \#triples & Acc.\ (trap\_local) & \shortstack{$\mathbb{E}$\\$|\Delta_E|$} \\
\midrule
All triples & 1020 & 84.3\% & 6.64e-03 \\
No low-data domains & 662 & 85.3\% & 7.93e-03 \\
Involves low-data domain & 358 & 82.4\% & 4.24e-03 \\
\bottomrule
\end{tabular}
\end{table}

Triples involving low-data domains may exhibit higher variance in gradient
estimates due to smaller minibatch coverage, but the sign accuracy remains
comparable, confirming that our method is robust to domain sample size variation.

\section{Additional Ablations}
\label{app:ablations}

\begin{table}[h]
\centering
\caption{
Complete cross-model evaluation results (extends Table~\ref{tab:main_results}).
Trap\_local is a local trapezoid ablation.
Regret is mean excess target loss relative to the better order (scaled by $10^4$).
Mean effect is the average absolute loss difference $|\Delta_E|$ between orders.
LLMs use a final-layer subspace (o\_proj, down\_proj); DDPM uses convolutional layers.
}
\label{tab:main_results_full}
\small
\setlength{\tabcolsep}{3.5pt}
\begin{tabular}{lcccccccc}
\toprule
Model &
\shortstack{Acc.\\(trotter)} &
\shortstack{Acc.\\(trap\_local)} &
\shortstack{Acc.\\(cosine)} &
\shortstack{Acc.\\@25\%} &
\shortstack{Regret\\(1e4)} &
\shortstack{Mean\\effect} &
$\eta$ \\
\midrule
Qwen2.5-1.5B  & 87.7\%$\pm$1.8 & 87.1\%$\pm$2.3 & 61.8\% & 95.7\%$\pm$1.5 & 1.75 & 3.59e-03 & 0.00203 \\
Llama-3.2-1B  & 92.0\%$\pm$2.2 & 92.1\%$\pm$2.5 & 77.9\% & 82.4\%$\pm$7.9 & 0.72 & 3.82e-04 & 0.00105 \\
Llama-3.1-8B  & 82.4\%$\pm$2.4 & 81.9\%$\pm$2.3 & 67.2\% & 94.9\%$\pm$1.6 & 2.14 & 5.67e-03 & 0.00133 \\
SmolLM3-3B    & 87.2\%$\pm$2.6 & 78.5\%$\pm$2.3 & 61.3\% & 86.3\%$\pm$2.5 & 127.8 & 1.25e-01 & 0.00178 \\
\midrule
DDPM UNet     & 91.1\%$\pm$3.7 & 88.3\%$\pm$3.7 & 88.9\% & 100.0\%$\pm$0.0 & 0.0000 & 1.99e-07 & 0.09064 \\
\bottomrule
\end{tabular}
\end{table}

\paragraph{Base (first-order) baseline.}
We compute the base (first-order) estimator alongside the main Trotter and trap\_local predictors for all runs.
For readability, we omit it from Table~\ref{tab:main_results}; the main first-order comparator reported in the tables is the gradient-cosine baseline.

\subsection{$\eta$-Autopilot}
\label{app:eta_autopilot}

We select a single step size $\eta$ \emph{once per model} from a small pilot run and reuse it for all experiments.
The autopilot balances two requirements: (i) \emph{detectability} (order effects must rise above an empirical noise floor), and (ii) \emph{sign validity} (higher-order terms must not destabilize $\mathrm{sign}(\hat\sigma)$).
In practice we run the pilot procedure with three random seeds and take the median selected $\eta$; a single seed usually gives a similar value, but the median is cheap insurance against outliers.

\paragraph{Detectability floor.}
For pilot triples $i$, we compute a deployable commutator score $\hat\sigma_i$
and estimate an effective standard error $\sigma_{\mathrm{eff}}$ for the paired loss difference
$\Delta_E$ at a small reference step $\eta_{\mathrm{ref}}$ via repeated AB/BA evaluations.
Concretely, for each triple we evaluate the per-batch differences
$d_{i,j}:=L_E(\thetaAB;\mathcal{B}_j)-L_E(\thetaBA;\mathcal{B}_j)$ over evaluation batches
$j=1,\dots,m$ and form $\mathrm{SE}_{\Delta,i}:=\operatorname{std}_j(d_{i,j})/\sqrt{m}$.
We aggregate across pilot triples robustly and use
$\sigma_{\mathrm{eff}}=\max\{\mathrm{SE}_\Delta,\sigma_{\mathrm{fp}}\}$, where $\sigma_{\mathrm{fp}}$ is a floating-point floor.
An empirical low-signal null floor is logged as a diagnostic but is excluded from the released selection rule, since including it can inflate the detectability floor.
Let $S_q := Q_q(|\hat\sigma_i|)$ be a high quantile over pilot triples.
The score-based detectability constraint $\eta^2 S_q \ge z\,\sigma_{\mathrm{eff}}$ yields Eq.~\eqref{eq:eta-min}.
The implementation also computes an analogous loss-difference floor calibrated from the measured $\Delta_E(\eta_{\mathrm{ref}})$; the implemented $\eta_{\min}$ is the conservative maximum of the score-based floor and this $\Delta_{\rm ref}$ floor, followed by the entanglement correction below.
Thus $\eta_{\mathrm{ref}}$ enters the released selection through the $\Delta_{\rm ref}$ floor even though it does not appear in the algebraic score floor in Eq.~\eqref{eq:eta-min}.

\paragraph{Sign-validity ceiling.}
We compute conservative sign-validity ceilings from two sources: a directional BCH consistency check and a loss-level sign-stability bound for the reference-point score.
Let $w_i$ denote nonnegative aggregation weights proportional to the pilot signal magnitude, $w_i \propto |\hat\sigma_i|$.
The formulas below give the local derivation.
The released policy aggregates the directional and loss-level ceilings conservatively and then clamps the selected $\eta$ by the resulting $\eta_{\mathrm{sign}}$; this is the sign-validity ceiling used by the selection rule.

\paragraph{Tradeoff ceiling.}
Independently, we aggregate the BCH magnitude ceilings $\eta_{\mathrm{bch},i}$ to obtain
$\eta_{\mathrm{bch}} := Q^w_{q_\eta}(\{\eta_{\mathrm{bch},i}\})$ and define a tradeoff ceiling
$\eta_{\mathrm{tradeoff}} := h_{\mathrm{bch}}\,\eta_{\mathrm{bch}}$,
where $h_{\mathrm{bch}}>1$ allows modest headroom beyond strict magnitude validity (we use $h_{\mathrm{bch}}=1.5$).

\paragraph{Per-triple ceiling definitions (sketch).}
Let $b_i$ be the bracket vector for triple $i$, and let $d\theta_i(\eta)$ denote the
two-step commutator displacement $\thetaAB-\thetaBA$ computed from two gradient steps
at step size $\eta$ using the pilot minibatches.
At a small reference step size $\eta_{\mathrm{ref}}$, define the BCH relative error
\[
r_{\mathrm{bch},i}
:=
\frac{\|d\theta_i(\eta_{\mathrm{ref}})-\eta_{\mathrm{ref}}^2 b_i\|}{\|\eta_{\mathrm{ref}}^2 b_i\|+\varepsilon}.
\]
Using the $O(\eta^3)$ remainder (so $r_{\mathrm{bch},i}$ scales approximately linearly in $\eta$),
we set a magnitude-valid ceiling
\[
\eta_{\mathrm{bch},i} := \kappa_{\mathrm{bch}}\frac{\eta_{\mathrm{ref}}}{\max\{r_{\mathrm{bch},i},\varepsilon\}}.
\]
For sign prediction we also compute a directional ceiling using the projection onto the
target gradient $g_E$:
\[
r_{\mathrm{dir},i}
:=
\frac{\bigl|\inner{g_E}{d\theta_i(\eta_{\mathrm{ref}})-\eta_{\mathrm{ref}}^2 b_i}\bigr|}{\bigl|\inner{g_E}{\eta_{\mathrm{ref}}^2 b_i}\bigr|+\varepsilon},
\qquad
\eta_{\mathrm{sign},i} := \kappa_{\mathrm{sign}}\frac{\eta_{\mathrm{ref}}}{\max\{r_{\mathrm{dir},i},\varepsilon\}}.
\]
Finally, a loss-level Taylor expansion yields an $\eta^3$ cross-term that can (when oppositely signed)
flip $\mathrm{sign}(\Delta_E)$ even if $d\theta$ remains BCH-consistent.
Concretely,
\[
\Delta_{E,i}(\eta) = \eta^2 \sigma_i + \eta^3 u_i + O(\eta^4),
\qquad
\sigma_i := \inner{g_E}{b_i},
\qquad
u_i \approx -\inner{g_A+g_B}{H_E b_i},
\]
so when $\sigma_i u_i<0$ a first sign flip occurs at $\eta_{\mathrm{flip}}\approx |\sigma_i/u_i|$.
We compute a conservative ceiling $\eta_{\mathrm{loss},i}$ from this signed cubic flip condition.
In the implementation, the directional and loss-level ceilings are aggregated and guarded separately, then combined with the BCH-headroom and absolute ceilings in the final clamp.
The simplified expression $\min\{\eta_{\mathrm{sign},i},\eta_{\mathrm{loss},i}\}$ is the per-triple local intuition; the released code is the authoritative specification for engineering guards.

\paragraph{Selection rule (cube-root interpolation with regime gating).}
Let
\[
\eta_{\mathrm{cube}}:=\max\{\eta_{\min},\,\eta_{\min}^{2/3}\eta_{\mathrm{tradeoff}}^{1/3}\},
\qquad
\alpha_{\min}:=(\eta_{\min}/\eta_{\mathrm{sign}})^2,
\]
and let $\rho_{\max}$ be a guardrail on how far the tradeoff ceiling can exceed the sign ceiling.
The LLM results in Table~\ref{tab:main_results} use the released slack-ratio policy:
\[
\eta =
\begin{cases}
\eta_{\mathrm{sign}}, & \eta_{\min}>\eta_{\mathrm{sign}} \quad\text{(underpowered)} \\
\min\{\eta_{\mathrm{sign}},\eta_{\mathrm{cube}}\}, & \hat N_{\rm eff}>1 \quad\text{(entanglement active)} \\
\eta_{\min}, & \alpha_{\min}<\alpha_{\mathrm{strong}} \quad\text{(strongly powered, no entanglement)} \\
\min\{\eta_{\mathrm{sign}},\eta_{\mathrm{cube}}\}, & \eta_{\mathrm{tradeoff}}\le \rho_{\max}\eta_{\mathrm{sign}} \\
\eta_{\mathrm{sign}}, & \text{otherwise (sign-limited)}.
\end{cases}
\]
We use $\alpha_{\mathrm{strong}}=0.1$ and $\rho_{\max}=2.0$ in our experiments.
In all four LLM pilots the N-finder correction is active ($\hat N_{\rm eff}>1$), so the strongly powered Llama rows in Table~\ref{tab:main_results} use the cube-root-interpolated branch rather than the bare $\eta_{\min}$ branch.
The DDPM diffusion model uses the \emph{upper} policy, directly selecting $\eta_{\mathrm{upper}}=\min\{\eta_{\mathrm{sign}},\eta_{\mathrm{tradeoff}}\}$, because diffusion models' weaker effect magnitudes require less conservative $\eta$ selection to achieve adequate signal-to-noise.
For architectures where regime-gating underperforms, forcing cube-root interpolation ($\eta = \eta_{\min}^{2/3}\eta_{\mathrm{tradeoff}}^{1/3}$, clamped to $\eta_{\mathrm{sign}}$) is a robust fallback.

\paragraph{Entanglement correction (N-finder).}
When pilot triples systematically underestimate evaluation-scale effects, we estimate a scale factor $\hat N$
by comparing order-effect magnitudes at the same small reference step $\eta_{\mathrm{ref}}$.
Let $\Delta_{\mathrm{ref}}$ denote the measured paired loss difference
$L_E(\thetaAB)-L_E(\thetaBA)$ at $\eta_{\mathrm{ref}}$ on pilot minibatches.
We form (i) a \emph{baseline} set by sampling within-domain triples (all three minibatch draws from a single pilot partition),
and (ii) an \emph{entangled probe} set by selecting $(A,B)$ pairs with high gradient cosine overlap and evaluating a small set of
high-coupling triples (including an ablation where $E$ is a 50/50 mixture of $A$- and $B$-batches).
Let $\delta_{\mathrm{base}}:=\operatorname{median}(|\Delta_{\mathrm{ref}}|)$ over baseline triples and
$\delta_{\mathrm{probe}}:=\operatorname{median}$ of the top-$k$ (largest) $|\Delta_{\mathrm{ref}}|$ values over probe triples.
We set $\hat N := \max\{1,\delta_{\mathrm{probe}}/\delta_{\mathrm{base}}\}$ and rescale
$\eta_{\min}\leftarrow \eta_{\min}/\sqrt{\hat N_{\mathrm{eff}}}$ before applying the selection rule above,
using a conservative $\hat N_{\mathrm{eff}}\ge 1$ (e.g.\ a bootstrap lower confidence bound on $\hat N$ when available).

\subsection{$\eta$ Sweeps (Robustness Check)}
\label{app:eta_sweeps}

To validate the predicted $\eta^2$ scaling and demonstrate robustness across operating points, we evaluated the method at multiple $\eta$ values per model.
Table~\ref{tab:eta_sweeps} shows results for $\eta \in \{0.0003, 0.001, 0.003\}$ for LLMs and $\eta \in \{0.0003, 0.001, 0.003, 0.01, 0.03, 0.1\}$ for diffusion.
These sweeps confirm that (i) $\mathbb{E}|\Delta_E|$ increases with $\eta$ as predicted by theory (Appendix~\ref{app:proof_delta_omega}), and (ii) sign accuracy can vary substantially with $\eta$, motivating principled selection.

\begin{table}[t]
\centering
\caption{
Robustness check: Trotter accuracy across $\eta$ sweep (mean $\pm$ std over seeds).
}
\label{tab:eta_sweeps}
\small
\begin{tabular}{lcccc}
\toprule
Model & $\eta$ & Trotter Acc & Mean $|\Delta_E|$ & Note \\
\midrule
Qwen2.5-1.5B & 0.0003 & 56.2\%$\pm$2.8 & 1.06e-04 & \\
             & 0.001  & 70.3\%$\pm$5.5 & 8.15e-04 & \\
             & 0.003  & 84.2\%$\pm$3.8 & 6.64e-03 & Sweep peak \\
\midrule
Llama-3.2-1B & 0.0003 & 92.6\%$\pm$1.0 & 1.44e-05 & Sweep peak \\
             & 0.001  & 92.4\%$\pm$2.0 & 3.13e-04 & \\
             & 0.003  & 83.5\%$\pm$1.8 & 2.38e-03 & \\
\midrule
Llama-3.1-8B & 0.0003 & 61.8\%$\pm$0.3 & 4.84e-04 & \\
             & 0.001  & 76.0\%$\pm$2.5 & 3.85e-03 & \\
             & 0.003  & 82.4\%$\pm$2.5 & 1.42e-02 & Sweep peak \\
\midrule
SmolLM3-3B   & 0.0003 & 81.6\%$\pm$2.8 & 1.81e-03 & \\
             & 0.001  & 87.2\%$\pm$2.1 & 2.61e-02 & Sweep peak \\
             & 0.003  & 82.2\%$\pm$2.2 & 4.83e-01 & \\
\midrule
DDPM UNet    & 0.0003 & 49.1\%$\pm$2.6 & 5.48e-08 & \\
             & 0.001  & 49.1\%$\pm$8.0 & 6.47e-08 & \\
             & 0.003  & 41.7\%$\pm$6.8 & 6.37e-08 & \\
             & 0.01   & 50.9\%$\pm$15.9 & 6.34e-08 & \\
             & 0.03   & 60.2\%$\pm$7.3 & 6.67e-08 & \\
             & 0.10   & 85.2\%$\pm$4.7 & 2.08e-07 & Sweep peak \\
\bottomrule
\end{tabular}
\end{table}

\paragraph{Key observations.}
Across models, accuracy is sensitive to the step size, and effect magnitudes grow rapidly with $\eta$.
Table~\ref{tab:main_results} therefore uses a single $\eta$ chosen by our autopilot from a small disjoint pilot set, while the sweeps here serve as a robustness check and to illustrate the range of valid operating points.
Table~\ref{tab:eta_sweeps} summarizes the representative sweep points; we omit the full per-seed grid for space.

\subsection{Gradient Cosine Similarity Baseline}
\label{app:cosine_baseline}

To validate that second-order corrections are necessary, we compare against a natural first-order baseline: selecting the order with higher gradient-target cosine similarity.
While gradient similarity has been used to detect task conflicts in multi-task learning~\citep{yu2020gradient} and measure forgetting in continual learning~\citep{imanov2026mechanisticanalysiscatastrophicforgetting}, to our knowledge it has not been applied as a predictor for sequential fine-tuning order.

\paragraph{Baseline definition.}
Given source domains $A, B$ and target domain $E$, compute at the initial point $\theta_0$:
\[
\cos(g_A, g_E) = \frac{\inner{g_A}{g_E}}{\norm{g_A}\norm{g_E}}, \qquad
\cos(g_B, g_E) = \frac{\inner{g_B}{g_E}}{\norm{g_B}\norm{g_E}}.
\]
Predict $A\!\to\!B$ if $\cos(g_B, g_E) > \cos(g_A, g_E)$ (i.e., end at the domain more aligned with $E$), otherwise predict $B\!\to\!A$.

\paragraph{Results on Llama-3.2-1B.}
We evaluated this baseline on the full 204-triple test suite for Llama-3.2-1B (float32 precision, $\eta=0.000936$, 4 evaluation batches).
The cosine baseline achieves \textbf{77.5\%} sign accuracy, substantially better than random (50\%) but leaving a \textbf{16.2pp gap} that the Lie-bracket method fills (93.6\%).

\paragraph{Interpretation.}
This confirms that (i) first-order gradient geometry captures \emph{some} predictive signal---alignment with the target matters---but (ii) second-order Lie-bracket corrections are necessary to achieve strong performance.
The 16.2pp gap validates the cost of computing Hessian-vector products: simple gradient cosine similarity, while informative, is insufficient for reliable transfer-order prediction.

\subsection{Multi-Step Generalization}
\label{app:multistep}

To validate practical relevance beyond the $k{=}1$ regime, we test whether the predictor (computed \emph{once} at $\theta_0$ using single-step theory) remains accurate when ground truth is evaluated at $k{>}1$ steps per domain.

\paragraph{Experimental setup.}
For each triple $(A, B, E)$, we:
\begin{enumerate}
\item Compute the predictor $\mathrm{sign}(\hat\sigma)$ at $\theta_0$ using the $k{=}1$ Lie-bracket theory (standard Trotter scoring)
\item Evaluate ground truth at multiple $k$ values: for each $k \in \{1, 5, 10, 20\}$, take $k$ gradient steps on domain $A$ followed by $k$ steps on domain $B$ (and vice versa), then compare final losses $L_E(\theta_{AB}^{(k)})$ vs.\ $L_E(\theta_{BA}^{(k)})$
\item Check if the $k{=}1$ predictor correctly predicts the sign of $\Delta_E^{(k)} := L_E(\theta_{AB}^{(k)}) - L_E(\theta_{BA}^{(k)})$
\end{enumerate}

Importantly, the predictor is held fixed across all $k$: we do \emph{not} recompute $\hat\sigma$ after taking any training steps.
We test on Llama-3.2-1B and Qwen2.5-1.5B with 204 triples each (single seed). Llama uses $\eta{=}0.000936$ and float32 precision; Qwen uses $\eta{=}0.00203$ and bfloat16 precision (with fp32-cast parameters). To control runtime for multi-step ground truth, we use a fixed evaluation protocol for all compared orderings.

\paragraph{Results.}
The original single-seed short-horizon validation gave Qwen2.5-1.5B accuracies of 91.2\%, 84.3\%, 77.0\%, and 71.6\% for $k\in\{1,5,10,20\}$, and Llama-3.2-1B accuracies of 93.6\%, 81.9\%, 76.0\%, and 71.6\%.
The multi-seed expansion supersedes this smaller table for Llama-3.2-1B; see Table~\ref{tab:long_horizon_full_app} for the 1,020-trial validation through $k{=}50$.

\paragraph{Interpretation.}
The predictor works best in the $k{=}1$ regime where the second-order theory is exact.
At longer horizons ($k{\geq}10$), accuracy degrades as higher-order trajectory effects accumulate (the Lie bracket evolves along the optimization path, and our predictor only captures the initial $\theta_0$ geometry).
However, the predictor remains substantially above random even at $k{=}20$, suggesting the initial second-order geometry has persistent predictive power.

This validates practical relevance: the method is most reliable for short-to-medium adaptation horizons ($k{\leq}10$ steps per domain, which covers many fine-tuning scenarios), and can inform pilot studies even for longer training runs.

\subsection{Long-Horizon Validation ($k=50$)}
\label{app:k50_horizon}

A common concern is that order effects become dominated by optimization noise at long horizons.
We therefore run a compute-limited validation at $k{=}50$ steps per domain (100 updates total) on Llama-3.2-1B, holding the $k{=}1$ predictor fixed at $\theta_0$ as in Appendix~\ref{app:multistep}.
In the expanded multi-seed run, the predictor achieves 666/1020 = 65.3\% sign accuracy at $k{=}50$ (exact two-sided binomial test: $p=9.7\times10^{-23}$) and reduces mean regret by 54.7\% relative to random ordering.

\subsection{Expanded-Subspace (Multi-Layer) Validation}
\label{app:fullmodel}

For LLMs, our main results use a final-layer subspace for computational efficiency.
Here we test whether the predictor remains effective when the trainable set spans multiple transformer layers, so that curvature computation must backpropagate through subsequent attention blocks.
We run Llama-3.2-1B with an 8-layer trainable subset (layers 8--15) using \texttt{o\_proj}+\texttt{down\_proj} in each layer (16 weight matrices; $8\times$ more than final-layer tuning), with $\eta{=}0.001376$ selected by $\eta$-autopilot calibrated for this larger subspace.
Across the expanded multi-seed evaluation, the Trotter predictor obtains 825/1020 sign accuracy (80.9\%) with 22.3\% regret reduction.

\subsection{AdamW Robustness}
\label{app:adamw}

Our theoretical analysis assumes memoryless gradient descent updates.
AdamW is stateful: bias correction, momentum, adaptive second moments, and weight decay enlarge the dynamical state, so a faithful commutator should live in an augmented state space rather than only in parameter space.
For that reason, we treat AdamW as a limitation and partial-extension test rather than as a solved setting.

\paragraph{Unmodified SGD-derived predictor.}
We evaluate AdamW ground truth while keeping the predictor fixed at $\theta_0$ and computed from the SGD-style bracket.
Over 1,020 Llama-3.2-1B trials, the unmodified predictor is weak at short horizons, but improves modestly by $k{=}20$.
Table~\ref{tab:adamw_robust} replaces the earlier optimistic single-seed AdamW numbers with the larger corrected evaluation.

\begin{table}[h]
\centering
\caption{Corrected AdamW robustness. The unmodified SGD-derived predictor is only a modest partial extension under AdamW; stateful optimizers require augmented-state commutators.}
\label{tab:adamw_robust}
\small
\setlength{\tabcolsep}{4pt}
\begin{tabular}{lccc}
\toprule
AdamW horizon & Correct/total & Accuracy & Regret red. \\
\midrule
$k{=}5$ & 487/1020 & 47.7\% & -9.2\% \\
$k{=}10$ & 523/1020 & 51.3\% & 12.1\% \\
$k{=}20$ & 582/1020 & 57.1\% & 33.2\% \\
$k{=}50$ & 476/1020 & 46.7\% & -2.0\% \\
\bottomrule
\end{tabular}
\end{table}

\paragraph{Adam-aware exploratory variant.}
As a first step toward the correct optimizer-aware theory, we also tested an Adam-aware augmented-state score that accounts for optimizer state in the local update geometry.
At $k{=}10$, this exploratory variant obtains 365/612 correct predictions (59.6\%) with 29.8\% regret reduction.
We include the result to show a plausible path forward, but we do not use it as a headline claim.

\paragraph{Interpretation.}
The conclusion is not that AdamW is solved.
Rather, the SGD-style bracket remains somewhat informative once the adaptive transient is less dominant, while the correct treatment of AdamW should augment the state and derive commutators for the optimizer dynamics.
In particular, AdamW suggests that the correct geometric object is an augmented-state commutator on $(\theta,m,v)$, but developing that theory remains future work.
This is an important limitation and a natural direction for future work.

\subsection{Finite-Difference HVP Ablation}
\label{app:fd_hvp}

To test whether approximate curvature information suffices, we evaluated finite-difference HVP approximation:
\[
H \cdot v \approx \frac{\nabla L(\theta + \epsilon v) - \nabla L(\theta)}{\epsilon}
\]
This avoids 2nd-order autodiff (\texttt{create\_graph=True}) at the cost of two gradient passes instead of one double-backward.

We tested on Qwen2.5-1.5B with 50 triples (seed 0) at $\eta{=}0.0026$, comparing exact HVPs (Pearlmutter's trick) against finite-difference approximation with $\epsilon{=}10^{-5}$.
Table~\ref{tab:fd_hvp} shows accuracy by $k$.

\begin{table}[h]
\centering
\caption{Exact vs finite-difference HVP approximation. FD-HVP degrades accuracy by 4--12pp across all $k$ values.}
\label{tab:fd_hvp}
\begin{tabular}{lccccc}
\toprule
Method & $k{=}1$ & $k{=}2$ & $k{=}5$ & $k{=}10$ & $k{=}20$ \\
\midrule
Exact HVP & \textbf{94\%} & \textbf{90\%} & \textbf{84\%} & \textbf{66\%} & \textbf{68\%} \\
FD-HVP ($\epsilon{=}10^{-5}$) & 82\% & 78\% & 76\% & 62\% & 64\% \\
\midrule
$\Delta$ (pp) & -12 & -12 & -8 & -4 & -4 \\
\bottomrule
\end{tabular}
\end{table}

The consistent degradation across $k$ values indicates that the predictor is sensitive to curvature precision.
While FD-HVP has comparable computational cost to exact HVPs (both $\sim$$2\times$ a gradient), the approximation error degrades accuracy substantially.
This validates that precise second-order information is necessary to capture the Lie-bracket geometry governing order effects.

\section{Extended Empirical Evidence}
\label{app:extended_evidence}

\subsection{Theory-Validation Figures}
\label{app:theory_figures}
\begin{figure*}[t]
\centering
\begin{minipage}{0.48\textwidth}
\centering
\includegraphics[width=\textwidth]{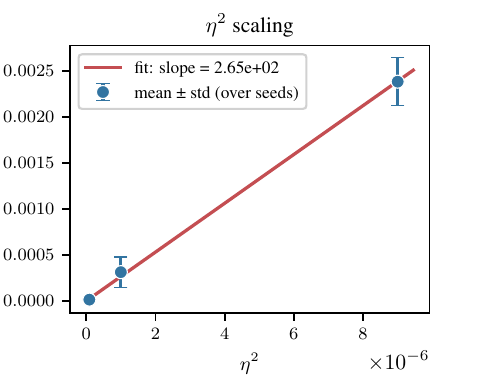}
\caption{$\eta^2$ scaling on Llama-3.2-1B across three step sizes (5 seeds each). The mean absolute loss difference $|\Delta_E|$ scales linearly with $\eta^2$, consistent with the BCH expansion's leading-order term.}
\label{fig:eta_scaling}
\end{minipage}
\hfill
\begin{minipage}{0.48\textwidth}
\centering
\includegraphics[width=\textwidth]{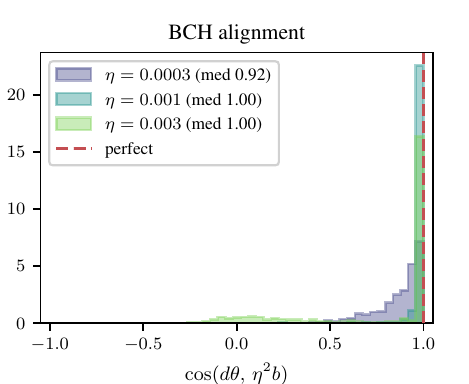}
\caption{BCH alignment on Llama-3.2-1B. The cosine similarity $\cos(d\theta,\eta^2 b)$ concentrates near 1 across all step sizes, confirming directional alignment with the Lie-bracket prediction.}
\label{fig:bch_alignment}
\end{minipage}
\end{figure*}

This appendix preserves the original ablation material above and gives additional evidence for the long-horizon, optimizer, post-training, and many-domain claims in the main text.
Unless otherwise stated, large-$N$ tournament percentiles are computed against 500 sampled random curricula, not against all $N!$ possible orders.

\subsection{Full Long-Horizon Table}
\label{app:long_horizon_full}

Table~\ref{tab:long_horizon_full_app} reports the full multi-seed long-horizon result used in the main text.
The predictor is held fixed at $\theta_0$, so degradation with $k$ reflects accumulated higher-order trajectory effects rather than predictor recomputation.

\begin{table}[h]
\centering
\caption{Long-horizon validation over 1,020 trials.}
\label{tab:long_horizon_full_app}
\small
\begin{tabular}{lccc}
\toprule
Horizon & Correct/total & Accuracy & Regret reduction \\
\midrule
$k{=}1$ & 949/1020 & 93.0\% & 64.0\% \\
$k{=}5$ & 962/1020 & 94.3\% & 97.4\% \\
$k{=}10$ & 912/1020 & 89.4\% & 93.1\% \\
$k{=}20$ & 831/1020 & 81.5\% & 86.3\% \\
$k{=}50$ & 666/1020 & 65.3\% & 54.7\% \\
\bottomrule
\end{tabular}
\end{table}

\subsection{Multi-Layer Subspace Validation}
\label{app:fullmodel_cr}

The original submission validated a final-layer trainable subspace for LLMs.
An expanded multi-layer Llama-3.2-1B run tunes the last 8 transformer layers (layers 8--15; \texttt{o\_proj}+\texttt{down\_proj} in each) and evaluates 1,020 trials.
The Trotter predictor obtains 825/1020 sign accuracy (80.9\%) with 22.3\% regret reduction.
We include this result as a robustness check that the signal is not confined to a tiny final-layer subspace. A separate eager-attention ablation on a 50-triple probe obtains the same 92.0\% Trotter sign accuracy when using only post-attention layers (\texttt{o\_proj}, \texttt{down\_proj}) and when adding attention input projections (\texttt{q\_proj}, \texttt{k\_proj}, \texttt{v\_proj}); the post-attention restriction is therefore a systems and memory compromise, not a theoretical exclusion.

\subsection{Sequence-Level $N{=}3$ Scheduling}
\label{app:n3}

For triples of source domains, all $3!=6$ orders can be evaluated exactly.
Across 40 quadruples, exact tournament scoring recovers the best sequence in 87.5\% of cases and places it in the top two in 96.0\% of cases.
The scalable Borda variant reaches 79.0\% top-1 and 94.5\% top-2, with mean Spearman correlation 0.929 and mean regret reduction 93.5\% for exact scoring.
This experiment is the cleanest direct evidence that pairwise bracket information composes into sequence-level rankings.

\subsection{Large-$N$ Tournament Details}
\label{app:large_n}

Table~\ref{tab:large_n_app} reports the large-$N$ sampled-percentile evaluations used for the paper's large-domain claims.
Each percentile compares the bracket-derived order against 500 sampled random curricula; the MMLU ranges report the two seeds included in the large-$N$ response.
The important point is not an exhaustive optimum over $N!$ schedules, which is infeasible at these sizes, but that the bracket tournament produces high-percentile orders while simple magnitude-only sorting is not a target-conditioned curriculum rule. The MMLU $N{=}56$ rows are the clearest illustration among the reported comparisons: the descending gradient-norm baseline falls below the first sampled percentile even though the bracket tournament reaches the 99.0--99.6th percentile.

\begin{table*}[h]
\centering
\caption{Large-$N$ tournament evaluations used for the paper's large-domain claims. Percentiles are relative to 500 sampled random curricula. ``Grad'' is descending gradient-norm ordering.}
\label{tab:large_n_app}
\small
\setlength{\tabcolsep}{4pt}
\begin{tabular}{llccc}
\toprule
Dataset/target & Setting & Bracket tournament & Grad & Notes \\
\midrule
MMLU subjects & $N{=}56,k{=}1$ & 99.0--99.6 & $<1$ & 2 seeds \\
MMLU subjects & $N{=}30,k{=}5$ & 89--96 & 2--22 & 2 seeds \\
Stack Python & $N{=}85,k{=}1$ & 99 & 69 & 85 available languages \\
Stack Python & $N{=}10,k{=}5$ & 86 & 47 & programming-language subset \\
Dolly summarization & $N{=}7,k{=}5$ & 99.8 & 49.9 & target-specific SFT task \\
\bottomrule
\end{tabular}
\end{table*}

Each pairwise tournament edge is an intrinsic local quantity for a pair of domains and a target.
With the shared-reference convention used for Borda ranking, the edge between domains $i$ and $j$ is
\[
W_{ij}=\langle g_j,H_i g_E^{\rm ref}\rangle-\langle g_i,H_j g_E^{\rm ref}\rangle,
\]
so adding or removing other source domains changes the set of edges being aggregated but not any existing $W_{ij}$.
We verified this directly by recomputing overlapping bracket entries in $N{=}5$, $N{=}7$, and $N{=}10$ runs: all 41 overlapping entries are bit-for-bit identical at float32 precision, and pairwise sign accuracy remains stable at 80\%, 76\%, and 73\% respectively (the $N{=}10$ pairwise test has $p=0.0025$ over 45 pairs).
This supports the view that the pairwise signal does not weaken merely because more domains are present; what changes with $N$, horizon length, and target is the sequence-level aggregation problem built on top of those edges.

\subsection{Instruction-SFT and DPO Details}
\label{app:sft_dpo}

Instruction-SFT uses Dolly-style task categories as domains.
At $k{=}1$, the Trotter predictor obtains 471/480 correct signs (98.1\%), mean Spearman 0.940, and 96.2\% recovered fraction.
At $k{=}20$, it obtains 351/480 correct signs (73.1\%) and 58.0\% regret reduction.

For DPO, we evaluate offline preference optimization domains.
At $k{=}1$, the predictor obtains 352/356 correct signs (98.9\%) with mean pilot BCH cosine 0.999.
At $k{=}20$, it obtains 257/356 correct signs (72.2\%) and 54.7\% regret reduction.
We describe this as DPO/offline preference optimization rather than online RLHF.

\subsection{Wall-Clock Measurements}
\label{app:wallclock}

Table~\ref{tab:wallclock_app} reports the wall-clock comparisons.
The planner is not cheaper than brute force at $k{=}1$, where evaluating both orders is itself very small; it becomes substantially cheaper at the horizons relevant for pilot fine-tuning.

\begin{table}[h]
\centering
\caption{Wall-clock measurements in seconds.}
\label{tab:wallclock_app}
\small
\begin{tabular}{llccc}
\toprule
Model & Horizon & Planner & Brute force & Speedup \\
\midrule
Llama-3.2-1B & $k{=}1$ & 0.811 & 0.743 & 0.92$\times$ \\
Llama-3.2-1B & $k{=}5$ & 0.811 & 2.853 & 3.52$\times$ \\
Llama-3.2-1B & $k{=}20$ & 0.811 & 10.764 & 13.27$\times$ \\
Llama-3.2-1B & $k{=}50$ & 0.811 & 26.599 & 32.79$\times$ \\
Qwen2.5-1.5B & $k{=}1$ & 0.670 & 0.597 & 0.89$\times$ \\
Qwen2.5-1.5B & $k{=}5$ & 0.670 & 2.290 & 3.42$\times$ \\
Qwen2.5-1.5B & $k{=}20$ & 0.670 & 8.675 & 12.94$\times$ \\
Qwen2.5-1.5B & $k{=}50$ & 0.670 & 21.402 & 31.92$\times$ \\
\bottomrule
\end{tabular}
\end{table}

\subsection{Bracket-Control Experiment}
\label{app:bracket_control}

The main planner selects between sequential endpoints; the same bracket vector also defines a local correction direction around the shared drift point.
Let
\begin{equation}
\theta_{\rm ref}=\theta_0-\eta(g_A+g_B),\qquad
\hat\sigma=\inner{g_E(\theta_{\rm ref})}{b_{AB}}.
\end{equation}
A first-order expansion around $\theta_{\rm ref}$ gives
\begin{equation}
L_E(\theta_{\rm ref}+\mu\eta^2b_{AB})
= L_E(\theta_{\rm ref}) + \mu\eta^2\hat\sigma + O(\eta^4),
\end{equation}
so choosing $\mu=-c\,\mathrm{sign}(\hat\sigma)$ with $c>0$ moves locally in the predicted target-loss-decreasing direction.
With $c=0.5$, the resulting correction $\theta_{\rm ctrl}=\theta_{\rm ref}-0.5\,\mathrm{sign}(\hat\sigma)\eta^2b_{AB}$ improves over the uncorrected shared-drift update on the Llama-3.2-1B Pile-domain triples in 179/204 cases (87.7\%) and beats both sequential endpoints in 95/204 cases (46.6\%).

\end{document}